\newtheorem{theorem}{Theorem}[section]
\newtheorem{corollary}[theorem]{Corollary}
\theoremstyle{definition}
\newtheorem{assumption}[theorem]{Assumption}
\newtheorem{observation}[theorem]{Observation}
\theoremstyle{remark}
\newtheorem{remark}[theorem]{Remark}
\DeclareMathOperator*{\expectation}{\mathbb{E}}
\DeclareMathOperator*{\variance}{\mathbb{V}}
\icmltitlerunning{A Parametric Class of Approximate Gradient Updates for Policy Optimization}
\begin{document}

\twocolumn[
\icmltitle{A Parametric Class of Approximate Gradient Updates for Policy Optimization}

\icmlsetsymbol{equal}{*}

\begin{icmlauthorlist}
\icmlauthor{Ramki Gummadi}{goog}
\icmlauthor{Saurabh Kumar}{goog,stan}
\icmlauthor{Junfeng Wen}{jw}
\icmlauthor{Dale Schuurmans}{goog,alb}
\end{icmlauthorlist}

\icmlaffiliation{goog}{Google Research, Brain Team}
\icmlaffiliation{stan}{Stanford University}
\icmlaffiliation{jw}{Layer 6 AI}
\icmlaffiliation{alb}{University of Alberta}

\icmlcorrespondingauthor{Ramki Gummadi}{gsrk@google.com}

\icmlkeywords{Machine Learning, ICML}

\vskip 0.3in
]

\printAffiliationsAndNotice{}  
 
\begin{abstract}
	Approaches to policy optimization have been motivated from diverse principles, based on how the parametric model is interpreted (e.g. value versus policy representation) or how the learning objective is formulated, yet they share a common goal of maximizing expected return.  To better capture the commonalities and identify  key differences between  policy optimization methods, we develop a unified perspective that re-expresses the underlying updates in terms of a limited choice of gradient form and scaling function.  In particular, we identify a parameterized space of approximate gradient updates for policy optimization that is highly structured, yet covers both classical and recent examples, including PPO.  
	As a result, we obtain novel yet well motivated updates that generalize existing algorithms in a way that can deliver benefits both in terms of convergence speed and final result quality.  An experimental investigation demonstrates that the additional degrees of freedom provided in the parameterized family of updates can be leveraged to obtain non-trivial improvements both in synthetic domains and on popular deep RL benchmarks.
\end{abstract}

\section{Introduction}\label{sec:intro}
Policy optimization in reinforcement learning considers the problem of learning a parameterized policy that maximizes some notion of expected return,
either via direct interactions with the environment (on-policy) or via learning from a fixed dataset (off-policy).
A direct approach to policy optimization is based on the policy gradient (PG) theorem \citep{pgoriginal},  which shows how a parameterized policy can be locally improved from unbiased Monte Carlo estimates of the gradient of the expected return. 
Alternately, value-based methods attempt to solve a proxy task of fitting a parameterized action-value function,
which explicitly models returns in the environment. 

Even though the learning objectives appear different, they share a common goal of constructing an agent that maximizes returns in an unknown environment.  
Moreover, they  share a low level algorithmic primitive of incrementally updating model parameters  based on 
learning signals derived from random experience,
which suggests a deeper underlying connection.
In this paper, we consider learning a single parametric model while being agnostic about whether it is a policy or a value model. 
A key contribution in Section~\ref{sec:connect} is to derive tight constraints on gradient estimates that correspond to standard action value or policy gradient updates on a parametric model. 
These results highlight the shared sample dependent components of the updates while also emphasizing key differences, which we use to organize a \emph{form-axis} of update variations. 
This leads us to propose a novel form of policy gradient (PG) in Theorem \ref{thm:pgpb}, \emph{Policy Gradient With Policy Baseline}, that leverages a specific form of state-action baseline \cite{actionbaseline2, actionbaseline1, actionbaseline3}, typically used to provide a control variate for additional variance reduction beyond state only baselines \cite{baseline1, baseline2}. 

In Section \ref{sec:scale_axis}, an alternate axis of variation among  updates is identified,  the \emph{scale-axis}, 
which covers gradients for robust alternatives to the MSE loss, including Huber loss.
Notably, the only difference between updates along this axis lies in how return estimates are rescaled in the gradient update. In Section \ref{sec:mlu}, we show  the Maximum-Likelihood  gradient can also be approximated as a scale variation. 

Prior work has shown that accounting for distribution shift from the behavior policy is valuable \citep{impala, ppo} in the case of off-policy learning. 
In Section \ref{sec:off_policy}, we consider off-policy corrections as another instance of \emph{scale}-axis variation. 
A key insight is the central role of a simple scaling function whose two scalar inputs are sample-dependent learning signals: (1) the return error estimate and (2) the log probability ratio of importance weight. 
In Section \ref{sec:ppo}, we prove that the gradient update arising from the surrogate objective defined for Proximal Policy Optimization (PPO) \citep{ppo} also corresponds to a particular non-trivial instantiation of this scaling function.

Section \ref{sec:unified} then summarizes the range of updates we identify for both discrete and continuous action spaces, and lists some shared properties of the scaling function that underlie viable update rules. 
In Section \ref{subsec:parametric_scale}, we extrapolate these shared properties as constraints to guide the design of a simple class of scaling functions that includes both a numerically stable approximation to the maximum likelihood scaling function as well as the default  scaling function used in policy gradient and Q-learning.

Section \ref{sec:eval} conducts an experimental evaluation of novel update rules in the expanded family,
in settings ranging from a synthetic 2D bandit benchmark (Section \ref{sec:2db}), a tabular environment (Section \ref{sec:tabular}) and the MuJoCo  continuous control  benchmark versus the PPO baseline (Section \ref{sec:cce}). These results demonstrate that the form-axis and scale-axis simplify the identification of new updates that are able to
improve policy optimization algorithms in a variety of situations.

\section{Background}\label{sec:background}

Consider the standard infinite horizon discounted Markov Decision Process model with
states indexed by $s \in \mathcal{S}$, actions by $a \in \mathcal{A}$ and a reward $r(s, a)$. 
A policy is a conditional distribution over the action space $\mathcal{A}$ for a given state, $\pi(.|s)$, satisfying the normalization constraint
$\sum_{a} \pi(a|s) = 1$ 
for all $s \in \mathcal{S}$.
Given policy $\pi$, let $Q^\pi(s, a) = \expectation_{\pi} \left[ \sum_{t=0}^\infty \gamma^t r_t | s_0=s, a_0=a \right]$ denote
the expected return starting from $(s, a)$ and then following the policy $\pi$.
The goal of policy optimization, loosely speaking, is to find a policy $\pi$ that achieves high expected returns. 
In the general case of a parametric policy, however, different policies can be optimal for different starting states. 
To resolve this ambiguity, a fixed but arbitrary initial state distribution $\mu(s)$ over $\mathcal{S}$ is typically used to define
a concrete objective as the expected return, $J_\mu(\pi) = \expectation_{s_0 \sim \mu} \left[ \sum_{t=0}^\infty \gamma^t r_t\right]$.
An alternate way to handle $J_\mu(\pi)$ is by defining a key distribution $d^\pi_\mu(s)$, the discounted state visitation 
distribution when sampling the initial state from $\mu$: $d^\pi_\mu(s) = (1-\gamma) \sum_{t=0}^\infty \gamma^t \expectation_{s_0 \sim \mu} \left[ \Pr^\pi(s_t = s | s_0)\right]$, 
where $\Pr^\pi$ represents the (unknown) transition kernel on the MDP corresponding to a given policy $\pi$. 
With this definition, it can be shown that $J_\mu(\pi) = \expectation_{(s, a) \sim D_\mu^\pi}[r(s, a)]$~\citep{puterman2014markov}, where 
\begin{equation}\label{eqn:dpimu}
D_\mu^\pi(s, a) = d^\pi_\mu(s) \pi(a|s). 
\end{equation}
When the policy is parameterized by $\theta$ as $\pi_\theta$, the classical policy gradient theorem \cite{pgoriginal} shows that 
$\nabla_\theta J_\mu(\pi_\theta)$ can also be expressed as an expectation of gradient samples over $D_\mu^\pi$: 
$\nabla_\theta J_\mu(\theta) = \expectation_{(s, a) \sim D^\pi_\mu} [Q^\pi(s, a) \nabla_\theta \log \pi_\theta(a|s)]$. 
In practice, $Q^\pi$ is generally not available to compute the gradients 
so \textbf{policy-based methods} usually replace it with an
estimate, $\hat{T}(s, a)$, based on, e.g., discounted Monte Carlo returns.
Such an estimator, however, can perform poorly in practice. 
To address this, a \emph{baseline} is typically subtracted
from the target estimate $\hat{T}(s, a)$, i.e., introducing a control variate without bias:
\begin{equation}\label{eqn:gpg}
\hat{G}_{PG}(s, a, \theta)  = (\hat{T}(s, a) - b(s))\nabla_\theta \log \pi_\theta(a|s)
\end{equation}
On the other hand, \textbf{value-based methods} like Q-learning rely on temporal differences and perform updates that do not explicitly follow the gradient of an objective function. 
Instead, they are motivated as \emph{semi-}gradients, defined by updates in a pseudo-objective that ignores certain terms in the gradient.
These are  typically considered off-policy methods and do not explicitly specify the sampling distribution for $(s, a)$ when estimating an update direction.
For instance, given a parametric model $q_\theta(s, a)$, the Q-learning update, denoted by $\hat{G}_{QL}$ can be written as:
\begin{equation}\label{eqn:gql}
	\hat{G}_{QL}(s, a, \theta) = (\hat{T}(s, a) - q_\theta(s, a)) \nabla_\theta q_\theta(s, a)
\end{equation}
where $\hat{T}(s, a) =  r(s, a) + \gamma \max_u  q_{\theta}(s', u)$ and $s'$ is a random sample for the transition kernel from $(s, a)$.
TD-learning with SARSA \citep{sutton2018reinforcement} 
on a policy $\pi$ uses a slightly different target: $\hat{T}(s, a) = r(s, a) + \gamma q_{\theta}(s', u)$, where $u \sim \pi(.|s')$, which is called the 1-step bootstrap. This estimate can have a bias when the value function is an approximation. Therefore, generalizations of the target definition based on $n$-step returns 
can also be considered, which smoothly interpolate between the unbiased Monte Carlo rollouts and the bootstrap estimates.  However, such variations on the target  are orthogonal to 
our analysis, which focuses on the structure of the gradient updates once an estimation procedure for $\hat{T}(s, a)$ is fixed.

\section{Policy-based versus Value-based Updates}\label{sec:connect}

Consider a learnable parametric function approximator,
$q_\theta(s, a): \mathcal{S} \times \mathcal{A} \mapsto \mathbb{R}$
for which the parameters $\theta$ are updated using gradient estimates constructed from stochastic samples of experience data.
For value-based methods (e.g. DQN or TD-learning), $q_\theta(s, a)$ explicitly models some notion of an infinite horizon discounted return value starting from state $s$ and taking action $a$. 
For a policy-based method, $q_\theta(s, a)$ could instead represent the energy function that encodes the action conditional distribution for a stochastic policy $\pi_\theta$. In the case of discrete actions, $q_\theta(s, a)$ can be interpreted as the action logits or preferences~\citep{sutton2018reinforcement}.
Taking this perspective, we begin our analysis by studying the gradients for a stochastic policy $\pi_\theta(a | s)$, that denotes a properly normalized conditional likelihood for taking action $a$ at state $s$.
For discrete actions, which we assume through this section, this is equivalent to the softmax transformation that links $\pi_\theta(a|s)$ with $\{q_\theta(s, u)\}_{u \in \mathcal{A}}$:
\begin{equation}\label{eqn:softmax}
\pi_\theta(a|s) \triangleq \exp{(q_\theta(s, a))} / \sum_u \exp(q_\theta(s, u))
\end{equation}

\subsection{Policy Gradient with Policy Baseline (PGPB)}\label{sec:pgpb}
In 
(\ref{eqn:gpg}), $\hat{G}_{PG}$ can often be impractical for non-trivial problems due to its high variance, which motivates baselines, which are generally parameterized functions that can depend on both the state and the action. 
The parameters that define baselines are typically independent of the policy parameters, however.
We now consider a particular form of state-action baseline 
that uses the policy logits as the baseline under the softmax parameterization of 
(\ref{eqn:softmax}).
The primary motivation for introducing this 
is to clarify the relationship between policy-based and value-based gradients.
We begin with a result that characterizes this form of the policy gradient estimator,
whose relation to value-based gradients would be evident later in Section \ref{sec:pg_av_connect} by contrasting the terms in their respective gradients.
All proofs are provided in the Appendix.
\begin{restatable}{theorem}{pgpb}[Policy Gradient with Policy Baseline]\label{thm:pgpb}
Given $q_\theta(s, a): \mathcal{S} \times \mathcal{A} \mapsto \mathbb{R}$, let $\pi_\theta(a|s)$ be defined by 
(\ref{eqn:softmax}). Consider the  sample based gradient estimator for 
$\theta$:
\begin{align}\label{eqn:pgpb}
	& \hat{G}_{PGPB}(s, a, \theta) \triangleq  & \\
		 & \Big(\hat{T}(s, a) - q_\theta(s, a)\Big)\nabla_\theta \log \pi_\theta(a|s)	- \nabla_\theta H(\pi_\theta(.|s)) & \nonumber
\end{align} 
where $H(\pi_\theta(.|s)) = \expectation_{a \sim \pi_\theta(.|s)} \left[- \log \pi_\theta(a|s) \right]$ denotes the entropy of 
$\pi_\theta(.|s)$. Then, $\hat{G}_{PGPB}$ is an unbiased estimate of the policy gradient if $\hat{T}$ is an unbiased estimate of return:
\begin{equation}\label{eqn:exp_pgpb}
	\nabla_\theta J^\pi_\mu(\theta) = \expectation_{(s, a) \sim D^\pi_\mu} \left[ \hat{G}_{PGPB}(s, a, \theta) \right]
\end{equation}
\end{restatable}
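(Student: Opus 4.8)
The plan is to take the expectation of $\hat{G}_{PGPB}$ under $D^\pi_\mu$ and show it collapses to the policy gradient term by term. First I would use the assumed unbiasedness of the target, $\expectation[\hat{T}(s,a)\mid s,a]=Q^\pi(s,a)$, together with the tower property, to replace $\hat{T}(s,a)$ by $Q^\pi(s,a)$ inside the expectation. This splits $\expectation_{(s,a)\sim D^\pi_\mu}[\hat{G}_{PGPB}]$ into three pieces: the standard score-function term $\expectation[Q^\pi(s,a)\nabla_\theta\log\pi_\theta(a|s)]$, a baseline term $-\expectation[q_\theta(s,a)\nabla_\theta\log\pi_\theta(a|s)]$, and the entropy term $-\expectation[\nabla_\theta H(\pi_\theta(.|s))]$. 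By the policy gradient theorem quoted in Section~\ref{sec:background}, the first piece is exactly $\nabla_\theta J_\mu(\theta)$, so it remains to show that the baseline and entropy terms cancel one another.

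The crux, and the step I expect to require the most care, is the action-dependent baseline term. Unlike a state-only baseline $b(s)$, which vanishes because $\expectation_{a\sim\pi_\theta}[\nabla_\theta\log\pi_\theta(a|s)]=0$, the quantity $q_\theta(s,a)$ depends on $a$ and therefore contributes a nonzero bias that must be tracked explicitly. To evaluate it I would invoke the softmax link of~(\ref{eqn:softmax}), which gives the identity $q_\theta(s,a)=\log\pi_\theta(a|s)+\log\sum_u\exp(q_\theta(s,u))$. Substituting this expression and using the score-function identity to annihilate the state-dependent log-partition factor leaves
\[
\expectation_{a\sim\pi_\theta(.|s)}\!\left[q_\theta(s,a)\nabla_\theta\log\pi_\theta(a|s)\right]=\expectation_{a\sim\pi_\theta(.|s)}\!\left[\log\pi_\theta(a|s)\,\nabla_\theta\log\pi_\theta(a|s)\right].
\]

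Finally I would verify that the right-hand side above is precisely $-\nabla_\theta H(\pi_\theta(.|s))$. Expanding $\nabla_\theta H=-\nabla_\theta\sum_a\pi_\theta(a|s)\log\pi_\theta(a|s)$ with the product rule yields two sums; the sum $\sum_a\nabla_\theta\pi_\theta(a|s)=\nabla_\theta 1=0$ drops out, and the surviving sum equals $-\expectation_{a\sim\pi_\theta}[\log\pi_\theta(a|s)\,\nabla_\theta\log\pi_\theta(a|s)]$, establishing the claim. Taking the outer expectation over $s\sim d^\pi_\mu$ (noting that $H$ is independent of $a$, so its expectation under $D^\pi_\mu$ reduces to one over $d^\pi_\mu$), the baseline term equals $+\expectation_{s}[\nabla_\theta H(\pi_\theta(.|s))]$, which exactly cancels the explicit $-\expectation[\nabla_\theta H]$ term. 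What survives is $\nabla_\theta J_\mu(\theta)$, as required. The takeaway is that the entropy regularizer in $\hat{G}_{PGPB}$ is not a heuristic addition but the exact correction needed to debias the policy-logit baseline.
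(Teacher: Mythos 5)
Your proposal is correct and follows essentially the same route as the paper: both arguments reduce to showing that the action-dependent baseline contribution $\expectation_{a\sim\pi_\theta}\left[q_\theta(s,a)\nabla_\theta\log\pi_\theta(a|s)\right]$ equals $-\nabla_\theta H(\pi_\theta(.|s))$, using the softmax identity $q_\theta(s,a)=\log\pi_\theta(a|s)+\log\sum_u\exp(q_\theta(s,u))$ and the fact that $\sum_a\nabla_\theta\pi_\theta(a|s)=0$ kills the log-partition term. The only cosmetic difference is direction: the paper starts from the policy gradient theorem and adds/subtracts the baseline (packaging the entropy identity as its helper Theorem~\ref{lemma:entropy} via a stop-gradient expression), whereas you expand the expectation of $\hat{G}_{PGPB}$ and show the pieces cancel.
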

\begin{remark}[\textbf{Entropy Term}]
Note that unlike MaxEnt-RL~\citep{haarnoja2017reinforcement}, the objective $J^\pi_\mu(\theta)$ does not include  entropy regularization in its formulation, yet an entropy term intriguingly shows up  in 
(\ref{eqn:pgpb}). 
\end{remark}
\subsection{Connecting Policy-based and Value-based Updates}\label{sec:pg_av_connect}
Consider two gradient estimators 
that can be viewed as fitting an action-value function by minimizing an objective w.r.t.\ a target estimator $\hat{T}(s, a)$,
where
the first objective is the common Mean Squared Error (MSE), while the second is the less common Mean Variance Error (MVE) \citep{flet21}.
\begin{restatable}{theorem}{msemve}\label{thm:msemve}
Consider the value-based gradients following the MSE and MVE objectives respectively defined as:
\begin{align}
	G_{MSE}(\theta) &= - \nabla_\theta \frac{1}{2} \expectation_{(s, a)} \left[ ( \hat{T}(s, a) -  q_\theta(s, a) )^2 \right] \label{eqn:jmse} \\
	G_{MVE}(\theta) &= - \nabla_\theta \frac{1}{2} \expectation_{s} \variance_{a}\Big[\hat{T}(s, a) - q_\theta(s, a) \Big], \label{eqn:jmve} 
\end{align}
where, 
$\variance_{a}$ denotes the variance of the distribution over $a$, conditioned on the state $s$.
Then, $\hat{G}_{MSE}$ and $\hat{G}_{MVE}$  below are unbiased gradient\footnote{These are technically semi-gradients because any dependence of $\hat{D}^\pi_\mu$, the assumed on-policy sampling distribution for $(s, a)$ and $\hat{T}(s, a)$ on $\theta$ are ignored, similar to most TD learning methods.} 
estimators for $G_{MSE}$ and $G_{MVE}$.
\begin{align}
	\hat{G}_{MSE}(s, a, \theta) &\triangleq  \left( \hat{T}(s, a) - q_\theta(s, a)\right) \nabla_\theta q_\theta(s, a)  \label{eqn:gmse} \\
	\hat{G}_{MVE}(s, a, \theta) &\triangleq  \Big(\hat{T}(s, a) - q_\theta(s, a)\Big) \nabla_\theta \log \pi_\theta(a|s) \label{eqn:gmve}
\end{align}
\end{restatable}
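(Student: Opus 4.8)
The plan is to prove the two assertions separately, dispatching the MSE claim as a warm-up and reserving the real content for MVE. Throughout I adopt the semi-gradient convention flagged in the footnote: when differentiating either objective I hold both the target $\hat{T}(s,a)$ and the sampling distribution $D^\pi_\mu$ (in particular its conditional factor $\pi_\theta(\cdot|s)$ governing the inner variance) fixed, so the only $\theta$-dependence I differentiate through is that of $q_\theta$. Writing $\delta(s,a) = \hat{T}(s,a) - q_\theta(s,a)$ throughout, the MSE claim is then immediate: differentiating the integrand of (\ref{eqn:jmse}), $\nabla_\theta \tfrac{1}{2}\delta(s,a)^2 = -\delta(s,a)\nabla_\theta q_\theta(s,a)$, and negating and exchanging the (fixed) expectation with the gradient by linearity gives $G_{MSE}(\theta) = \expectation_{(s,a)}[\delta(s,a)\nabla_\theta q_\theta(s,a)] = \expectation_{(s,a)}[\hat{G}_{MSE}(s,a,\theta)]$.

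For MVE I first compute the gradient of the inner conditional variance. Treating $\pi_\theta(\cdot|s)$ as the fixed law for $a$ and using $\variance_a[X] = \expectation_a[X^2] - \expectation_a[X]^2$ with $X=\delta(s,a)$ and $\nabla_\theta X = -\nabla_\theta q_\theta(s,a)$, the standard variance-gradient algebra collapses to a covariance:
\[
\nabla_\theta \tfrac{1}{2}\variance_a\!\big[\delta(s,a)\big]
= -\,\mathrm{Cov}_a\!\big(\delta(s,a),\ \nabla_\theta q_\theta(s,a)\big)
= -\,\expectation_a\!\big[(\delta(s,a)-m(s))\,\nabla_\theta q_\theta(s,a)\big],
\]
where $m(s) = \expectation_{a\sim\pi_\theta}[\delta(s,a)]$. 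Taking the outer expectation over $s$ and negating yields $G_{MVE}(\theta) = \expectation_s \expectation_a[(\delta - m)\,\nabla_\theta q_\theta]$, which is a \emph{value}-gradient form.

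It remains to recognize this value-gradient form as the advertised \emph{policy}-gradient estimator $\hat{G}_{MVE} = \delta\,\nabla_\theta \log \pi_\theta$. This is exactly where the softmax link (\ref{eqn:softmax}) enters: differentiating it gives $\nabla_\theta \log \pi_\theta(a|s) = \nabla_\theta q_\theta(s,a) - \expectation_{u\sim\pi_\theta}[\nabla_\theta q_\theta(s,u)]$, i.e.\ the score is precisely the mean-centered value gradient. Substituting this identity, $\expectation_a[\delta\,\nabla_\theta\log\pi_\theta] = \expectation_a[\delta\,\nabla_\theta q_\theta] - m(s)\,\expectation_u[\nabla_\theta q_\theta] = \expectation_a[(\delta - m)\,\nabla_\theta q_\theta]$, using the elementary fact that centering either argument of a covariance leaves it unchanged. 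Chaining this with the previous display gives $\expectation_{(s,a)}[\hat{G}_{MVE}] = G_{MVE}(\theta)$, completing the proof.

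The only genuinely delicate point is the bookkeeping in the semi-gradient step: the score $\nabla_\theta \log \pi_\theta$ appearing in $\hat{G}_{MVE}$ must be understood as arising purely from the softmax algebraic identity relating $\nabla_\theta \log \pi_\theta$ to $\nabla_\theta q_\theta$, and \emph{not} from differentiating the sampling distribution, which the convention holds fixed. I expect this to be the main obstacle to state cleanly; once it is pinned down, everything else is the variance-to-covariance reduction and the trivial observation that a covariance is insensitive to centering either factor.
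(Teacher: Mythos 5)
Your proof is correct and follows essentially the same route as the paper's: semi-gradient differentiation for MSE, the variance-to-covariance reduction for MVE, and the softmax identity $\nabla_\theta \log \pi_\theta(a|s) = \nabla_\theta q_\theta(s,a) - \expectation_{u\sim\pi_\theta}[\nabla_\theta q_\theta(s,u)]$ to convert the centered value gradient into the score. The only (immaterial) difference is that you center the error factor $\delta - m$ while the paper centers the gradient factor; both are the same covariance.
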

Note that value-based methods typically do not specify the replay buffer sampling distribution, in contrast
to PG, which comes with a specific on-policy sampling distribution, $D^\pi_\mu$ defined in (\ref{eqn:dpimu}). However, this distinction is not relevant when
considering the updates conditioned on a given $(s, a)$ unless off-policy corrections are considered.
We are now ready to relate the policy gradient $\hat{G}_{PGPB}$ with $\hat{G}_{MSE}$ and $\hat{G}_{MVE}$. 
\begin{corollary}\label{cor:rel}
Given any fixed target return estimator, $\hat{T}(s, a)$, 
the following relations hold between the sample-based gradient estimators at every sample $(s, a)$ and $\theta$.
\begin{align}
	\hat{G}_{MVE} - \hat{G}_{PGPB} &= \nabla_\theta H(\pi_\theta(.|s)) \label{eqn:rel1} \\
	\hat{G}_{MSE} - \hat{G}_{MVE} &\propto \sum_u \pi_\theta(u|s) \nabla_\theta q_\theta(s, u) \label{eqn:rel2} \\
	\hat{G}_{MSE} - \hat{G}_{PGPB} -& \nabla_\theta H(\pi_\theta(.|s)) \label{eqn:rel3} \\
					&\propto \sum_u \pi_\theta(u|s) \nabla_\theta q_\theta(s, u) \nonumber
\end{align}
\end{corollary}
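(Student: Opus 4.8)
The plan is to verify each of the three relations by direct algebraic manipulation of the estimator definitions in (\ref{eqn:pgpb}), (\ref{eqn:gmse}), and (\ref{eqn:gmve}). The crucial observation is that all three estimators share the common scalar factor $\hat{T}(s,a) - q_\theta(s,a)$ and differ only in the vector they multiply, so every relation reduces to comparing those vectors. The single genuinely nontrivial ingredient is the gradient of the log-policy under the softmax parameterization (\ref{eqn:softmax}), which I would establish once and reuse.

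First, for (\ref{eqn:rel1}), I would simply subtract (\ref{eqn:pgpb}) from (\ref{eqn:gmve}). Both expressions contain the term $(\hat{T}(s,a) - q_\theta(s,a))\nabla_\theta \log \pi_\theta(a|s)$, which cancels, leaving exactly $\nabla_\theta H(\pi_\theta(.|s))$. No further work is needed here; this is immediate from the definitions.

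Next, for (\ref{eqn:rel2}), I would take the difference of (\ref{eqn:gmse}) and (\ref{eqn:gmve}), factor out the common scalar, and reduce the problem to the vector $\nabla_\theta q_\theta(s,a) - \nabla_\theta \log \pi_\theta(a|s)$. The key step is to differentiate $\log \pi_\theta(a|s) = q_\theta(s,a) - \log \sum_u \exp(q_\theta(s,u))$, obtained from (\ref{eqn:softmax}). Differentiating the log-partition term yields $\sum_u \pi_\theta(u|s)\nabla_\theta q_\theta(s,u)$, so that $\nabla_\theta q_\theta(s,a) - \nabla_\theta \log \pi_\theta(a|s) = \sum_u \pi_\theta(u|s)\nabla_\theta q_\theta(s,u)$. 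Multiplying back by the scalar $\hat{T}(s,a)-q_\theta(s,a)$, which is constant in the summation index $u$, gives precisely the claimed proportionality.

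Finally, (\ref{eqn:rel3}) follows with no new computation by combining the first two results. Rearranging (\ref{eqn:rel1}) gives $\hat{G}_{PGPB} + \nabla_\theta H(\pi_\theta(.|s)) = \hat{G}_{MVE}$, so the left-hand side of (\ref{eqn:rel3}) equals $\hat{G}_{MSE} - \hat{G}_{MVE}$, which is exactly the quantity characterized by (\ref{eqn:rel2}). The only obstacle worth flagging is careful bookkeeping in the softmax-gradient computation underlying the second relation; everything else is cancellation and substitution.
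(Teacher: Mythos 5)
Your proposal is correct and follows exactly the paper's own argument: (\ref{eqn:rel1}) by cancellation of the shared $\Delta_R\,\nabla_\theta\log\pi_\theta(a|s)$ term, (\ref{eqn:rel2}) via the softmax identity $\nabla_\theta q_\theta(s,a)-\nabla_\theta\log\pi_\theta(a|s)=\sum_u\pi_\theta(u|s)\nabla_\theta q_\theta(s,u)$, and (\ref{eqn:rel3}) by combining the first two. You simply spell out the softmax-gradient computation that the paper leaves implicit.
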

\begin{proof}
Equation (\ref{eqn:rel1}) follows directly from combining 
(\ref{eqn:pgpb}) with 
(\ref{eqn:gmve}). 
Equation (\ref{eqn:rel2}) follows from relating $\nabla_\theta \log \pi_\theta(a|s)$ with $\nabla_\theta q_\theta(s, a)$ using Equation (\ref{eqn:softmax}).
Finally, Equation
(\ref{eqn:rel3}) follows from the previous two observations.
\end{proof}
The significance of Corollary \ref{cor:rel} is that Equations 
(\ref{eqn:rel1}), (\ref{eqn:rel2}), (\ref{eqn:rel3}) express the relationship between the three gradient forms in terms of quantities that have \emph{no dependence on the action  $a$ or the return estimate $\hat{T}(s, a)$}. The main  signal indicating quality of the chosen action $a$, quantified by $\hat{T}(s, a)$, is aggregated identically for all three updates via:
\begin{equation} \label{eqn:delta_r}
	\Delta_R \triangleq \hat{T}(s, a) - q_\theta(s, a)
\end{equation}
To summarize the three forms of updates above more explicitly in terms of the underlying parametric model $q_\theta(s, a)$, we now state a helper theorem:
\begin{restatable}{theorem}{entropy}\label{lemma:entropy}
	Let $\hat{q}_\theta$ indicate a stop-gradient operator on $q_\theta$ with respect to $\theta$. Given the softmax parameterization of $\pi_\theta$ (Equation (\ref{eqn:softmax})), 
	we have:
\begin{equation}\label{eqn:entropy_lemma}
 \nabla_\theta H(\pi_\theta(.|s) +  \nabla_\theta \expectation_{u|s \sim \pi_\theta} \left[ \hat{q}_\theta(s, u) \right] = 0
\end{equation}
\end{restatable}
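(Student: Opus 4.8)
The plan is to expand both terms as explicit finite sums over the action space and show they are negatives of one another. Throughout, write $Z(s) = \sum_u \exp(q_\theta(s,u))$, so that the softmax parameterization (\ref{eqn:softmax}) gives $\log \pi_\theta(a|s) = q_\theta(s,a) - \log Z(s)$, and abbreviate $\pi_a = \pi_\theta(a|s)$.

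First I would handle the entropy term. Starting from $H(\pi_\theta(.|s)) = -\sum_a \pi_a \log \pi_a$ and applying the product rule gives
\[
\nabla_\theta H = -\sum_a \big(\nabla_\theta \pi_a\big)\log \pi_a \;-\; \sum_a \pi_a\,\nabla_\theta \log \pi_a .
\]
The second sum is the familiar score-function identity $\sum_a \pi_a \nabla_\theta \log \pi_a = \sum_a \nabla_\theta \pi_a = \nabla_\theta \sum_a \pi_a = \nabla_\theta 1 = 0$, so it drops out. Substituting $\log \pi_a = q_\theta(s,a) - \log Z(s)$ into the remaining sum and invoking $\sum_a \nabla_\theta \pi_a = 0$ a second time to annihilate the $\log Z(s)$ contribution leaves
\[
\nabla_\theta H = -\sum_a q_\theta(s,a)\,\nabla_\theta \pi_a .
\]

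Next I would treat the stop-gradient term. Because $\hat{q}_\theta$ is held fixed under differentiation but equals $q_\theta$ in value, only the policy probabilities carry the gradient, so
\[
\nabla_\theta \expectation_{u|s \sim \pi_\theta}\big[\hat{q}_\theta(s,u)\big] = \nabla_\theta \sum_u \pi_\theta(u|s)\,\hat{q}_\theta(s,u) = \sum_u q_\theta(s,u)\,\nabla_\theta \pi_\theta(u|s).
\]
Adding this to the previous display shows the two sums are identical up to sign and hence cancel, which establishes (\ref{eqn:entropy_lemma}).

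There is no real obstacle here beyond careful bookkeeping, and the one point I would state precisely is the semantics of the stop-gradient operator: the cancellation succeeds exactly because $\hat{q}_\theta$ supplies its value $q_\theta(s,u)$ but contributes none of its own $\theta$-derivative, whereas in $\nabla_\theta H$ the full $\theta$-dependence of $\log \pi_\theta$ is differentiated, with the two ``extra'' pieces (the score-function term and the normalizer $\log Z(s)$) both vanishing precisely because $\sum_a \nabla_\theta \pi_a = 0$. Informally, the identity is just the statement that the entropy gradient equals the negative of the gradient of the mean logit taken with the logit \emph{values} frozen.
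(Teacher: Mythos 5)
Your proof is correct and uses essentially the same ingredients as the paper's: the softmax relation $\log\pi_\theta(a|s)=q_\theta(s,a)-\log Z(s)$, the identity $\sum_a\nabla_\theta\pi_\theta(a|s)=0$, and the product rule; the paper merely runs the computation in the opposite direction, starting from $\nabla_\theta\expectation_{u|s\sim\pi_\theta}[\hat{q}_\theta(s,u)]$ and reassembling it into $-\nabla_\theta H$, whereas you reduce both terms to the common form $\pm\sum_a q_\theta(s,a)\nabla_\theta\pi_\theta(a|s)$. No gap.
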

Theorem \ref{lemma:entropy}, Equation
(\ref{eqn:softmax}) and Equation (\ref{eqn:delta_r}), together allow for a summary of the three updates directly in terms of $q_\theta(s, a)$ as:
\begin{align}
	\hat{G}_{PGPB} =& ~\Delta_R \Big(\nabla_\theta q_\theta(s, a) - \expectation_{u|s \sim \pi} \left[ \nabla_\theta q_\theta(s, u) \right]\Big) \nonumber \\
		       &~+ \nabla_\theta \expectation_{u|s \sim \pi_\theta} \left[ \hat{q}_\theta(s, u) \right] \label{eqn:pgpb2} \\
	\hat{G}_{MVE} =&~ \Delta_R \Big(\nabla_\theta q_\theta(s, a) - \expectation_{u|s \sim \pi} \left[ \nabla_\theta q_\theta(s, u) \right]\Big) \label{eqn:mve2} \\
	\hat{G}_{MSE}  =&~ \Delta_R \Big( \nabla_\theta q_\theta(s, a) \Big) \label{eqn:mse2}
\end{align}
We refer to the gradients listed in Equations
(\ref{eqn:pgpb2}), (\ref{eqn:mve2}), (\ref{eqn:mse2}) as update variations along the \emph{form-axis} since the key changes required to span these updates have to do with what are effectively the \emph{bias} vectors, $\expectation_{u|s \sim \pi} \left[ \nabla_\theta q_\theta(s, u)\right]$ and $\nabla_\theta \expectation_{u|s \sim \pi_\theta} \left[ \hat{q}_\theta(s, u) \right]$, that can be computed exclusively from the model $q_\theta$ without any query to the data sample, and in particular, $\Delta_R$.

\section{Update Variations: The Scale-Axis}\label{sec:scale_axis}
In the previous section, we discussed update variations along the \emph{form-axis}. 
In this section, an orthogonal
variation along the \emph{scale-axis} 
is proposed, 
which considers how the gradient terms are scaled based on the return estimate. 

\subsection{Error Loss Function Updates}\label{sec:error_loss}
Consider value fitting objectives arising from aggregating individual state action prediction errors via an \emph{error loss function}, $l$ which is assumed to be smooth and convex, $l: \mathbb{R} \mapsto \mathbb{R}_+$ such that $l(0) = 0$. Under these assumptions, the derivative of $l$ is given by a non-decreasing function $\lambda : \mathbb{R} \mapsto \mathbb{R}$ with $\lambda(0) = 0$. 
Given any such loss criterion $l$ on the prediction error $\Delta_R$, let $G_{E(l)}(\theta)$ denote a generalization of $G_{MSE}(\theta)$ in Equation (\ref{eqn:jmse}) that generalizes to the error loss $l(.)$ from the MSE:
\begin{equation}\label{eqn:jef}
	G_{E(l)}(\theta) = - \nabla_\theta \expectation_{(s, a)} \left[l \Big ( \hat{T}(s, a) -  q_\theta(s, a) \Big ) \right]
\end{equation}
The sample gradient estimator, $\hat{G}_\lambda$ for the scaling function $\lambda$ (or equivalently, its loss $l$) is:
\begin{align}
	\hat{G}_\lambda(s, a, \theta) &= - \nabla_\theta ~l\Big(\hat{T}(s, a) - q_\theta(s, a) \Big ) \\
				&= \lambda(\Delta_R) \nabla_\theta q_\theta(s, a)
\end{align}
Unlike the \emph{form-axis} variations which involve the previously discussed sample independent \emph{bias} vectors, we refer to this type of variation as the \emph{scale-axis}. 
The MSE update is equivalent to $l(x) = \frac{1}{2} x^2$ and $\lambda(x) = x$. 
An alternative, the Huber loss \citep{huber64}, has an $l(x)$ parameterized by a hyperparameter $\delta > 0$ as:
\begin{equation} \nonumber
	l(x) = 
	\begin{cases}
		\frac{1}{2} x^2, & \text{if } |x| < \delta \\
		\delta(|x| - \frac{1}{2} \delta), & \text{ otherwise}
	\end{cases}
\end{equation}
In this case, $\lambda(x) = \text{clip} (x, -\delta, \delta)$ is a clipped version of the identity scaling function that arises from the MSE loss. 
Clipping the TD-errors is a common heuristic in deep RL  \citep{mnih2015human}, and corresponds to a simple scale variation as shown above.

\subsection{A Maximum-Likelihood Update}\label{sec:mlu}
Next, we derive a novel scale-axis update that is inspired by a maximum likelihood objective. 
Given a target estimator, $\hat{T}$,
consider an action target distribution 
$\hat{p}_{T}(a|s) \triangleq \exp{\Big(\hat{T}(s, a) - F(\hat{T})(s)\Big)}$, where $F(T)(s) \triangleq \log{\sum_{a} \exp{T(s, a)}}$. 
The gradient of the log-likelihood for action conditionals given this target distribution is:
\begin{align}
	& G_{ML}(\theta) \triangleq \expectation_{s} \left[ \expectation_{a|s \sim \hat{p}_T} \nabla_\theta \log{\pi_\theta(a|s)} \right] \nonumber \\
	=& \expectation_{s} \left[ \expectation_{a|s \sim \hat{p}_T} \Big[ \nabla_\theta q_\theta(s, a) \Big] - \sum_u \pi_\theta(u|s) \nabla_\theta q_\theta(s, u) \right] \nonumber \\
	       = &\expectation_{s} \left[ \expectation_{a|s \sim \pi} \left[ \frac{\hat{p}_T(a|s)}{\pi_\theta(a|s)} \nabla_\theta q_\theta(s, a) \right] - \expectation_{u|s \sim \pi} \Big[ \nabla_\theta q_\theta(s, u) \Big] \right] \nonumber \\
	       = &\expectation_{(s, a) \sim \pi}  \left[ \left(\frac{\hat{p}_T(a|s)}{\pi_\theta(a|s)} - 1\right) \nabla_\theta q_\theta(s, a) \right] \label{eqn:deriv_ml} 
\end{align}
Note that the log-likelihood ratio between $\hat{p}_T$ and $\pi_\theta$ is:
\begin{align*}
	& \log \hat{p}_T(a|s) - \log \pi_\theta(a|s) \\
	& = \hat{T}(s, a) - F(\hat{T})(s) - q_\theta(s, a) + F(q_\theta)(s) \\
	& = \Delta_R + \xi(s), \text { where } \xi(s) = F(q_\theta)(s) - F(\hat{T})(s)
	,
\end{align*}
which gives an unbiased sample estimator for 
(\ref{eqn:deriv_ml}) in the form 
$(e^{\Delta_R + \xi(s)} - 1) \nabla_\theta q_\theta(s, a)$. However, computing $F(\hat{T})(s)$ 
explicitly requires defining $\hat{T}(s,a)$ for all possible actions and not just the chosen action. 
To avoid this issue, we make an approximation that $\xi(s) \approxeq 0$, which may also be justified as an 
implicit constraint on the estimated target values.
This motivates an alternate update: 
\begin{equation}\label{eqn:ml}
\hat{G}_{ML}(s, a, \theta) \triangleq \Big(e^{\Delta_R}- 1 \Big) \nabla_\theta q_\theta(s, a)
\end{equation}
This update can be recovered by $\lambda(x) = e^x - 1$, which also satisfies $\lambda(0) = 0$ and is increasing. 

In \citet{sil}, a surrogate objective derived from modifying the MSE loss for Q-learning by clipping the prediction error below at 0 is proposed, which is referred to as self imitation learning (SIL). 
We can write the gradient expression corresponding to this objective as
\begin{equation}\label{eqn:silo}
	\hat{G}_{SIL}(s, a, \theta) =  \max(\Delta_R, 0) \nabla_\theta q_\theta(s, a)
\end{equation}
which reveals that it could be considered a piece-wise linear approximation to the Maximum Likelihood gradient $\hat{G}_{ML}$ derived above since $\lambda(x) = e^x - 1 \approxeq \max (x, 0) $.

\subsection{Off-Policy Corrected Updates}\label{sec:off_policy}
In Section \ref{sec:connect}, we made an implicit assumption that the sampled actions for each update are chosen from the 
policy encoded by the current parameters, which is 
an on-policy assumption that is typically not satisfied in practice.
Accounting for a different behavior policy requires 
re-weighting the updates with off-policy importance ratio for each sample, which is an action dependent scalar signals, similar to $\Delta_R$. 
Let $\pi_b$ denote the behavior policy. Denote the log probability ratio at a given $(s,a)$ sample as:
\begin{equation}\label{eqn:delta_o}
	\Delta_O \triangleq  \log \frac{\pi_\theta(a|s)}{\pi_b(a|s)} 
\end{equation}
For practical reasons, we only consider importance correction of the action conditional 
distribution
at each state and implicitly assume that the state marginals match, an assumption which is also common in prior practical methods which make use of off-policy corrections \citep{chen19, ppo}.
Note that the two scalar quantities $\Delta_O$ and $\Delta_R$, defined in Equations 
(\ref{eqn:delta_o}) and (\ref{eqn:delta_r})  summarize all the action dependent \emph{learning signals} needed for gradient estimation on any given $(s,a)$ sample.
Accounting for importance weight correction, the updates in Equations 
(\ref{eqn:pgpb2}), (\ref{eqn:mve2}), (\ref{eqn:mse2}), (\ref{eqn:ml}) can be generalized to:
\begin{align}
	\hat{G}_{PGPB} =& ~e^{\Delta_O} \Delta_R \Big(\nabla_\theta q_\theta(s, a) - \expectation_{u|s \sim \pi} \left[ \nabla_\theta q_\theta(s, u) \right]\Big)
	\nonumber
	\\
		       &~+ \nabla_\theta \expectation_{u|s \sim \pi_\theta} \left[ \hat{q}_\theta(s, u) \right] \label{eqn:pgpb3} \\
	\hat{G}_{MVE} =&~ e^{\Delta_O}\Delta_R \Big(\nabla_\theta q_\theta(s, a) - \expectation_{u|s \sim \pi} \left[ \nabla_\theta q_\theta(s, u) \right]\Big) \label{eqn:mve3} \\
	\hat{G}_{\lambda}      =&~ e^{\Delta_O} \lambda(\Delta_R)\Big( \nabla_\theta q_\theta(s, a) \Big) \label{eqn:ml3}
\end{align}
\begin{remark}
Equation (\ref{eqn:ml3}) spans an entire class of updates by choosing $\lambda: \mathbb{R} \mapsto \mathbb{R}$ 
to be any particular non-decreasing function which vanishes at $0$. Setting $\Delta_O = 0$ recovers the on-policy special case that
ignores the importance weights.
\end{remark}
\begin{remark}
For 
(\ref{eqn:pgpb3})
the  term $\nabla_\theta \expectation_{u|s \sim \pi_\theta} \left[ \hat{q}_\theta(s, u) \right]$ does not include an off-policy correction because it can be computed explicitly without estimating the return for the chosen action sample. 
This is in contrast to the  terms involving $\Delta_R$, which are specific to the sampled action and need  importance correction whenever the actions are chosen by a divergent behavior policy.
\end{remark}
With off-policy corrections, the scale-axis variation can now include a joint dependence of the gradient estimates on $\Delta_O$ and $\Delta_R$ in a single expression, $f(\Delta_O, \Delta_R)$.
Note that the form-axis variations considered in Section \ref{sec:connect} can  be systematically generalized by replacing $\Delta_R$ with
 $f(\Delta_O, \Delta_R) = e^{\Delta_O} \Delta_R$.
So far, we have only demonstrated cases of $f(\Delta_O, \Delta_R)$ that are multiplicatively factored into terms that depend only on $\Delta_O$ and $\Delta_R$. 
However, this need not be a universal assumption, as shown next for  PPO. 

\subsection{The Proximal Policy Optimization (PPO) update}\label{sec:ppo}
PPO \citep{ppo} optimizes a surrogate objective derived from policy gradient to learn a policy parameterization directly without an explicit state-action value function.
To relate to our framework, we need to reconstruct an explicit state-action preference $q_\theta$ from $\pi_\theta$. While this can be done using the softmax assumption in the discrete action case, the more general continuous action setting is not straightforward,
as $\pi_\theta$ may only support sampling the conditional actions and estimating the conditional probability for a given state-action pair (e.g. from a closed form Gaussian).
Nevertheless, if we have an estimate for $V^\pi(s)$, the state value function, then \citet{pgqcombine} suggests that we can consider an implied estimate of the state-action values for some constant $\alpha$ of the form $\hat{q}(s, a) \approxeq \alpha (\log \pi_\theta(a|s) + H(\pi(.|s)) + \hat{V}^\pi(s)$.
Using the notation for the advantage function, $\hat{A} = \hat{T} - V^\pi(s)$, we define:
\begin{equation} \label{eqn:qhat_dest}
	\Delta_R = \hat{T} - \hat{q} = \hat{A} - \alpha \big(\log \pi(a|s) + H(\pi(.|s))\big)
\end{equation}
Given any advantage estimator (e.g., GAE($\lambda$)~\citep{schulman2016high}), this extends our generalized update rule starting from PGPB using the value notation to a policy parameterized update rule that also supports continuous actions as:
\begin{equation}\label{update:cts}
	\hat{G}_{f, \pi} \triangleq f(\Delta_O, \Delta_R) \nabla_\theta \log \pi_\theta(a|s) + \beta \nabla_\theta H(\pi_\theta(.|s))
\end{equation}
In the above equation, $\beta$ is an additional entropy  hyperparameter similar to the final term in $\hat{G}_{PGPB}$ for the special case of discrete actions from Section \ref{sec:connect}.

\begin{restatable}{theorem}{ppo}\label{thm:ppo}
The PPO surrogate objective gradient can be recovered as $\hat{G}_{f, \pi}$ in 
(\ref{update:cts}) for the gradient scaling function, $f(x, y) = e^x y ~\tau_\epsilon(x, y)$, where $\tau_\epsilon(x, y) = \mathbbm{1}_{y > 0} \mathbbm{1}_{x < \log(1 + \epsilon)} + \mathbbm{1}_{y <0} \mathbbm{1}_{x > \log(1 - \epsilon)}$ while choosing $\alpha = 0$ in 
(\ref{eqn:qhat_dest}). 
	Note that $\tau_\epsilon(x, y) \in \{0, 1\} ~ \forall x, y, \epsilon$.
\end{restatable}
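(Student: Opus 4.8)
The plan is to differentiate the PPO clipped surrogate directly and match the result, summand by summand, against the template $\hat{G}_{f,\pi}$ in (\ref{update:cts}). Writing $r_\theta(s,a) = \pi_\theta(a|s)/\pi_b(a|s)$ for the probability ratio and $\hat{A}$ for the advantage estimate, the PPO surrogate (per sample, before the entropy bonus) is $L^{CLIP} = \min\big(r_\theta \hat{A},\, \text{clip}(r_\theta, 1-\epsilon, 1+\epsilon)\,\hat{A}\big)$. The first thing to record is that, by the definitions in (\ref{eqn:delta_o}) and (\ref{eqn:qhat_dest}) with $\alpha = 0$, we have $e^{\Delta_O} = r_\theta(s,a)$ and $\Delta_R = \hat{A}$, so the claimed scaling $f(\Delta_O, \Delta_R) = e^{\Delta_O}\Delta_R\,\tau_\epsilon(\Delta_O, \Delta_R)$ reduces to $r_\theta \hat{A}\,\tau_\epsilon(\Delta_O, \hat{A})$.

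The core of the argument is a sign-based case analysis showing that $\tau_\epsilon$ is precisely the indicator of the region where the \emph{unclipped} branch $r_\theta \hat{A}$ is selected by the $\min$, and hence is the only branch carrying a nonzero $\theta$-gradient. For $\hat{A} > 0$: on $\{r_\theta < 1+\epsilon\}$ the $\min$ tracks the unclipped branch (the two branches agree when $r_\theta \in [1-\epsilon, 1+\epsilon]$, and the clipped value $(1-\epsilon)\hat{A}$ exceeds $r_\theta\hat{A}$ when $r_\theta < 1-\epsilon$), whereas for $r_\theta > 1+\epsilon$ the $\min$ equals the constant $(1+\epsilon)\hat{A}$. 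A symmetric check for $\hat{A} < 0$ shows the $\min$ follows the unclipped branch on $\{r_\theta > 1-\epsilon\}$ and is locally constant for $r_\theta < 1-\epsilon$. Translating the thresholds through $\Delta_O = \log r_\theta$ gives $\{\Delta_O < \log(1+\epsilon)\}$ when $\hat{A}>0$ and $\{\Delta_O > \log(1-\epsilon)\}$ when $\hat{A}<0$, which are exactly the two indicator terms composing $\tau_\epsilon$. Thus, away from the measure-zero boundaries $\{\Delta_O = \log(1\pm\epsilon)\}$, we have $\nabla_\theta L^{CLIP} = \tau_\epsilon(\Delta_O, \hat{A})\,\nabla_\theta(r_\theta \hat{A})$.

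It then remains to differentiate the active branch. The log-derivative identity $\nabla_\theta \pi_\theta = \pi_\theta \nabla_\theta \log \pi_\theta$, together with the fact that $\pi_b$ is independent of $\theta$, yields $\nabla_\theta r_\theta = r_\theta \nabla_\theta \log \pi_\theta(a|s) = e^{\Delta_O}\nabla_\theta \log\pi_\theta(a|s)$, so $\nabla_\theta(r_\theta\hat{A}) = e^{\Delta_O}\hat{A}\,\nabla_\theta \log\pi_\theta(a|s)$. Combining with the previous step gives $\nabla_\theta L^{CLIP} = e^{\Delta_O}\hat{A}\,\tau_\epsilon(\Delta_O,\hat{A})\,\nabla_\theta \log\pi_\theta(a|s) = f(\Delta_O,\Delta_R)\,\nabla_\theta \log\pi_\theta(a|s)$. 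Finally, the entropy-regularization term that PPO optimizes alongside the clipped surrogate contributes the $\beta\,\nabla_\theta H(\pi_\theta(.|s))$ summand, reproducing (\ref{update:cts}) in full.

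The step I expect to be the main obstacle is the case analysis and its bookkeeping: one must verify, for each sign of $\hat{A}$ and each of the three ranges of $r_\theta$ relative to $1\pm\epsilon$, which argument of the $\min$ is smaller, and confirm that the gradient-bearing region coincides exactly with the support of $\tau_\epsilon$. The non-differentiability of $\min$ and $\text{clip}$ at the clip boundaries is a minor technical point, handled by observing that those sets have measure zero (equivalently, by passing to subgradients), so they leave the update unchanged almost everywhere.
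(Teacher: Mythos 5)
Your proof is correct and follows essentially the same route as the paper's: rewrite the clipped surrogate via a sign split on $\hat{A}$, observe that the $\min$ selects the unclipped branch exactly on the support of $\tau_\epsilon$ (so the clipped constants contribute no gradient), and apply the log-derivative identity to $r_\theta\hat{A}$. Your explicit treatment of the measure-zero clip boundaries and the $\beta\,\nabla_\theta H$ term is slightly more careful than the paper's write-up, but the argument is the same.
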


\section{A Parametrized Class of Updates }\label{sec:unified}

In Sections \ref{sec:connect} and \ref{sec:scale_axis}, two separate axes of variation for standard policy optimization updates were identified.
These were referred to as the \emph{form}-axis and the \emph{scale}-axis to highlight the systematic structure shared between the  updates.
For a given $f: \mathbb{R}^2 \mapsto \mathbb{R}$, the form variations include:
\begin{align}
	\mathcal{U}_{Q}(f) &\triangleq f(\Delta_O,\Delta_R)\Big( \nabla_\theta q_\theta(s, a) \Big) \\
	\mathcal{U}_{V}(f) &\triangleq f(\Delta_O,\Delta_R) \Big(\nabla_\theta q_\theta(s, a) - \expectation_{u|s \sim \pi} \left[ \nabla_\theta q_\theta(s, u) \right]\Big) \\
	\mathcal{U}_{P}(f) &\triangleq f(\Delta_O,\Delta_R) \Big(\nabla_\theta q_\theta(s, a) - \expectation_{u|s \sim \pi} \left[ \nabla_\theta q_\theta(s, u) \right]\Big) \nonumber \\
			   & \hspace{1in}+ \nabla_\theta \expectation_{u|s \sim \pi_\theta} \left[ \hat{q}_\theta(s, u) \right] \\
	\mathcal{U}_{\pi}(f) &\triangleq f(\Delta_O, \Delta_R) \nabla_\theta \log \pi_\theta(a|s) + \beta \nabla_\theta H(\pi_\theta(.|s))
\end{align}
In the above equations, the first three rules apply specifically to discrete action settings where there is an explicit state-action parameterized model. 
The final update, $\mathcal{U}_{\pi}(f)$, in contrast, applies to a direct policy parameterization $\pi_\theta$ that is unavoidable 
in continuous action settings due to the need for efficient sampling and normalization of the action distribution. 

Section \ref{sec:scale_axis} outlined a range of possibilities for the scale-axis variation of gradient updates. 
In Assumption \ref{assumption:constraints} below, we consider natural constraints that restrict the possibilities while investigating the design of new approximate scaling functions $f: \mathbb{R}^2 \mapsto \mathbb{R}$ that may not directly correspond to any prior known principle.  
\begin{assumption}\label{assumption:constraints}
Let $f: \mathbb{R}^2 \mapsto \mathbb{R}$ denote a function that captures the dependence of a gradient update on $\Delta_O, \Delta_R$. Then, we consider $f$ to be a valid scaling function for a gradient update rule only if:
\begin{enumerate}
	\item $f(\Delta_O, \Delta_R)$ is non-decreasing in $\Delta_R$, with $f(\Delta_O, 0) = 0$ for any fixed $\Delta_O$. This implies that $f(\Delta_O, \Delta_R)$ has the same sign as $\Delta_R$, i.e. $\Delta_R f(\Delta_O, \Delta_R) \ge 0$. 
	\item $|f(\Delta_O, \Delta_R)|$ is non-decreasing in $\Delta_O$ around a neighborhood of $0$  for any fixed $\Delta_R$. 
\end{enumerate}
\end{assumption}
Constraint 1 ensures that the update direction increases $q_\theta(s, a)$ when it underestimates $\hat{T}$ and decreases $q_\theta(s, a)$ when it overestimates $\hat{T}$.
Constraint 2 ensures that samples that are more off-policy have a weaker update strength in comparison to on-policy samples. This is only assumed around some neighborhood of $0$ to allow for the trust region constraints \citep{schulman2015trust} that inspire the clipping surrogate optimized by PPO. Both constraints are clearly satisfied by all of the specific updates derived in Section \ref{sec:scale_axis}. 

\subsection{A Parametric Class of Scaling Functions}
\label{subsec:parametric_scale}
We now leverage the unified view including the constraints of Assumption \ref{assumption:constraints} to  identify a numerically stable approximation, $f_{MLA}$ to the maximum likelihood scaling function, which was derived as $f_{ML}(\Delta_O, \Delta_R) = e^{\Delta_O} (e^{\Delta_R} - 1)$, but suffers from an exponential dependence on both $\Delta_O$ and $\Delta_R$.
This new approximation, derived in the appendix, is:
\begin{align}\label{eqn:ml1}
f_{MLA}(x, y) & = \\
	& \begin{cases}
		-\frac{1}{2}(1 + x)^2, \text{ if } y \le -(1 + x) \le 0  \\
		y \max \left(1 + x + \frac{y}{2}, 0\right)  \text{ otherwise}
	\end{cases} \nonumber 
\end{align}
\begin{remark}[Qualitative similarity of $f_{ML}$ to $f_{MLA}$]
When highly off-policy ($\Delta_O \ll 0$), $f_{MLA}(\Delta_O, \Delta_R)\! =\! 0$ for all values of $\Delta_R$ up to some positive threshold that depends on $\Delta_O$. This threshold increases as the sample becomes more off-policy (i.e.\  $\Delta_O \!\rightarrow\! -\infty$). For large values of $\Delta_R$, the update strength is quadratic in $\Delta_R$.  Conversely, it saturates to a negative value as $\Delta_R\! \rightarrow\! -\infty$.
\end{remark}
We now introduce a  class of  scaling functions that is derived by further approximations to $f_{MLA}$.
This class spans a range of behaviors, all satisfying the constraints of Assumption \ref{assumption:constraints}. 
\begin{align}\label{eqn:ml2}
	f_{MLA(\alpha_o, \alpha_r)}(x, y) &=  \\
	     & y \max \left( 1 + \alpha_o x + \alpha_r y , \frac{(1 + \alpha_o x)_+}{2}\right) \nonumber 
\end{align}
In 
Equation (\ref{eqn:ml2}), the constants 
$\alpha_o, \alpha_r$ denote hyper-parameters\footnote{$(x)_+$ denotes $\max(x, 0)$} that can be used to span a class of update rules corresponding to a two-dimensional search space of scaling functions.
\begin{observation}\label{obs:cases}[Recovery of Specific Scale Functions]
	With particular values of $\alpha_o, \alpha_r$, $f_{MLA(\alpha_o, \alpha_r)}$ includes\footnote{In a non-vanishing neighborhood of $(0, 0)$} the following important special cases of scaling functions that were explicitly considered during the unification.
	\begin{enumerate}
	    \item $\alpha_o = 1, \alpha_r=0.5$ approximately recovers $f_{MLA}$, and therefore $f_{ML}$. 
		\item $\alpha_o = 1, \alpha_r=0$ approximately recovers the baseline scaling function for policy gradient with importance weight corrections.
		\item $\alpha_o = 0, \alpha_r=0$ recovers the baseline identity scaling function for action value methods without importance weight corrections (e.g. Q-learning).
	\end{enumerate}
\end{observation}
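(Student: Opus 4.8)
The plan is to treat the three cases as direct substitutions into the definition of $f_{MLA(\alpha_o, \alpha_r)}$ in Equation (\ref{eqn:ml2}), simplifying the nested $\max$ and positive-part operators on a neighborhood of the origin where their arguments carry a fixed sign, and then comparing against each target scaling function via Taylor expansion. The one target that is recovered \emph{exactly} is case (3), so I would dispatch it first: substituting $\alpha_o = \alpha_r = 0$ gives $f_{MLA(0,0)}(x,y) = y\max(1, \tfrac12) = y$, which is precisely the identity scaling function $\lambda(\Delta_R) = \Delta_R$ of Equation (\ref{eqn:ml3}) with $\Delta_O$ ignored. This holds for all $(x,y)$, so no neighborhood restriction is needed.

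Next, for case (2) with $\alpha_o = 1,\ \alpha_r = 0$, I would fix a neighborhood of $(0,0)$ small enough that $1 + x > 0$, so that $(1+x)_+ = 1+x$ and $\max(1+x,\ \tfrac{1+x}{2}) = 1 + x$. This yields $f_{MLA(1,0)}(x,y) = y(1+x)$. Expanding $e^{x} = 1 + x + O(x^2)$ then shows $y(1+x) = e^{x}y + O(x^2 y)$, i.e.\ it agrees to first order in $x$ with the importance-weighted policy-gradient scale $e^{\Delta_O}\Delta_R$ appearing in Equations (\ref{eqn:pgpb3})--(\ref{eqn:mve3}), which justifies the qualifier \emph{approximately recovers}.

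For case (1) with $\alpha_o = 1,\ \alpha_r = \tfrac12$, I would restrict to a neighborhood where $1 + x > 0$ and additionally $1 + x + y > 0$, so that $1 + x + \tfrac{y}{2} \ge \tfrac{1+x}{2}$ and the first argument of the outer $\max$ is selected, giving $f_{MLA(1,1/2)}(x,y) = y(1 + x + \tfrac{y}{2})$. On the same neighborhood the defining condition $y \le -(1+x) \le 0$ of the first (saturated) branch of $f_{MLA}$ in Equation (\ref{eqn:ml1}) fails while $1 + x + \tfrac{y}{2} > 0$, so the ``otherwise'' branch applies and $f_{MLA}(x,y) = y(1 + x + \tfrac{y}{2})$ as well; hence the two coincide exactly near $(0,0)$. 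Finally, expanding $f_{ML}(x,y) = e^{x}(e^{y}-1) = (1 + x + O(x^2))(y + \tfrac{y^2}{2} + O(y^3))$ and keeping terms through second order gives $y(1 + x + \tfrac{y}{2}) + O(\|(x,y)\|^3)$, so $f_{MLA}$, and therefore $f_{MLA(1,1/2)}$, is a second-order approximation to $f_{ML}$.

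The computations are routine; the only point requiring care is the neighborhood bookkeeping, namely identifying for each parameter choice which branch of the nested $\max$/positive-part is active near $(0,0)$, and in case (1) verifying that the active branch of $f_{MLA(1,1/2)}$ matches the correct branch of the piecewise $f_{MLA}$ rather than its saturated branch. The \emph{approximately} qualifier is then made precise in each comparison by the order of the Taylor remainder.
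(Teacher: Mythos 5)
Your argument is correct, and for claims (2) and (3) it coincides with the paper's own justification: claim (3) is the same trivial substitution, and for claim (2) the paper simply notes the global identity $\max\bigl(1+x,\tfrac{(1+x)_+}{2}\bigr)=(1+x)_+$ and invokes $e^x\approx(1+x)_+$, which is your local computation without the neighborhood restriction. Where you genuinely depart from the paper is claim (1): the paper's stated justification is purely visual, pointing to plots of $f_{ML}$ and $f_{MLA(1,0.5)}$ along slices of $\Delta_O$ and $\Delta_R$ (the analytic content lives separately in the appendix's design derivation, which reaches $y\max(1+x+\tfrac{y}{2},0)$ via a Jensen-inequality bound on $\int e^t\,dt$ rather than via Taylor expansion). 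You instead verify directly that near $(0,0)$ the saturated branch of $f_{MLA}$ is inactive and the active branch of $f_{MLA(1,1/2)}$ selects $y(1+x+\tfrac{y}{2})$, so the two functions agree \emph{exactly} on a neighborhood of the origin, and then show this expression matches $e^x(e^y-1)$ through second order. This buys a precise, self-contained statement of what ``approximately recovers'' means (agreement up to an $O(\|(x,y)\|^3)$ remainder), at the cost of losing the global one-sided bound that the paper's Jensen-based derivation provides; both are legitimate, and yours is the more rigorous reading of the observation as stated.
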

\begin{proof}[Justification of Observation \ref{obs:cases}]
		For claim 1, a visualization for the projections of $f_{ML}$ and $f_{ML(1, 0.5)}$ along various slices of $\Delta_O$ and $\Delta_R$ is shown in Figures ~\ref{fig:ml1} and \ref{fig:ml2} in the Appendix.\\
		To justify claim 2, note that $f_{MLA(1, 0)}(x, y) = y \max(1 + x, \frac{(1+x)_+}{2}) = y \max(1+x, 0) \approxeq y e^x$. \\
		Claim 3 follows trivially as $f_{MLA(0, 0)}(x, y) = y$.
\end{proof}

\section{Empirical Evaluation}\label{sec:eval}

The goal of the experiments is to investigate the practical utility of the the parametric approximate gradient updates proposed. Through a diverse set of evaluations, we find that structure identified in the updates is narrow enough to demonstrate the possibility of improvements to the vanilla versions both in simple settings with few confounding effects as well as a more complex deep RL benchmark setting.

\subsection{A Synthetic 2D Contextual Bandit}\label{sec:2db}

The experiments in this Section are designed to enable an exhaustive study of the dynamics of several update rules and their convergence to the optimum under the following constraints: (1) An easily visualizable 2D parameter space (2) Under-parameterization that makes value fitting non-trivial (3) A unique and tractable optimum parameter setting to analyze the gap from the optimum solution (4) Avoid confounders related to the target estimation procedures.

 \textbf{Setting:} State space is two-dimensional corresponding to a degenerate single step MDP, denoted using $x = (x_0, x_1) \in \mathbb{R}^2$. Action space 
is discrete with $a \in \{0, \ldots, 7\}$ embedded onto a unit circle with embeddings $\Psi(a) \triangleq  (\cos(2 \pi a / 8), \sin(2 \pi a / 8))$.
Reward is set as $r(x, a) = \sigma(\langle x, \Psi(a)\rangle) \in (0, 1)$,
where $\langle a, b \rangle$ denotes the dot product of $a, b$; and $\sigma$ is the sigmoid.
The dataset consists of $(x, a, r)$, with $x$ sampled from the 2D standard Gaussian and $a$ sampled uniformly in each batch.   
The model considered\footnote{The simpler alternative of $q_\theta(x, a) = \langle \left( \theta_0 x_0, \theta_1 x_1\right), \Psi(a)\rangle$ results in a policy parameterization that is scale invariant wrt $\theta$, and hence without a proper solution.} is $q_\theta(x, a) = \langle \left( \theta_0 (1 + x_0) - 1, \theta_1(1 + x_1) - 1\right), \Psi(a)\rangle$.
The true reward cannot be fit exactly under the given parameterization, but the optimal parameters for maximizing returns of the implied policy 
are $\theta^* = (1, 1)$.  
Figure \ref{fig:objective} in the Appendix visualizes the policy return objective landscape across the two-dimensional search space traversed by the various gradient update rules. 

\textbf{Results:}
\begin{figure}
		\centering
		\includegraphics[width=0.7\linewidth]{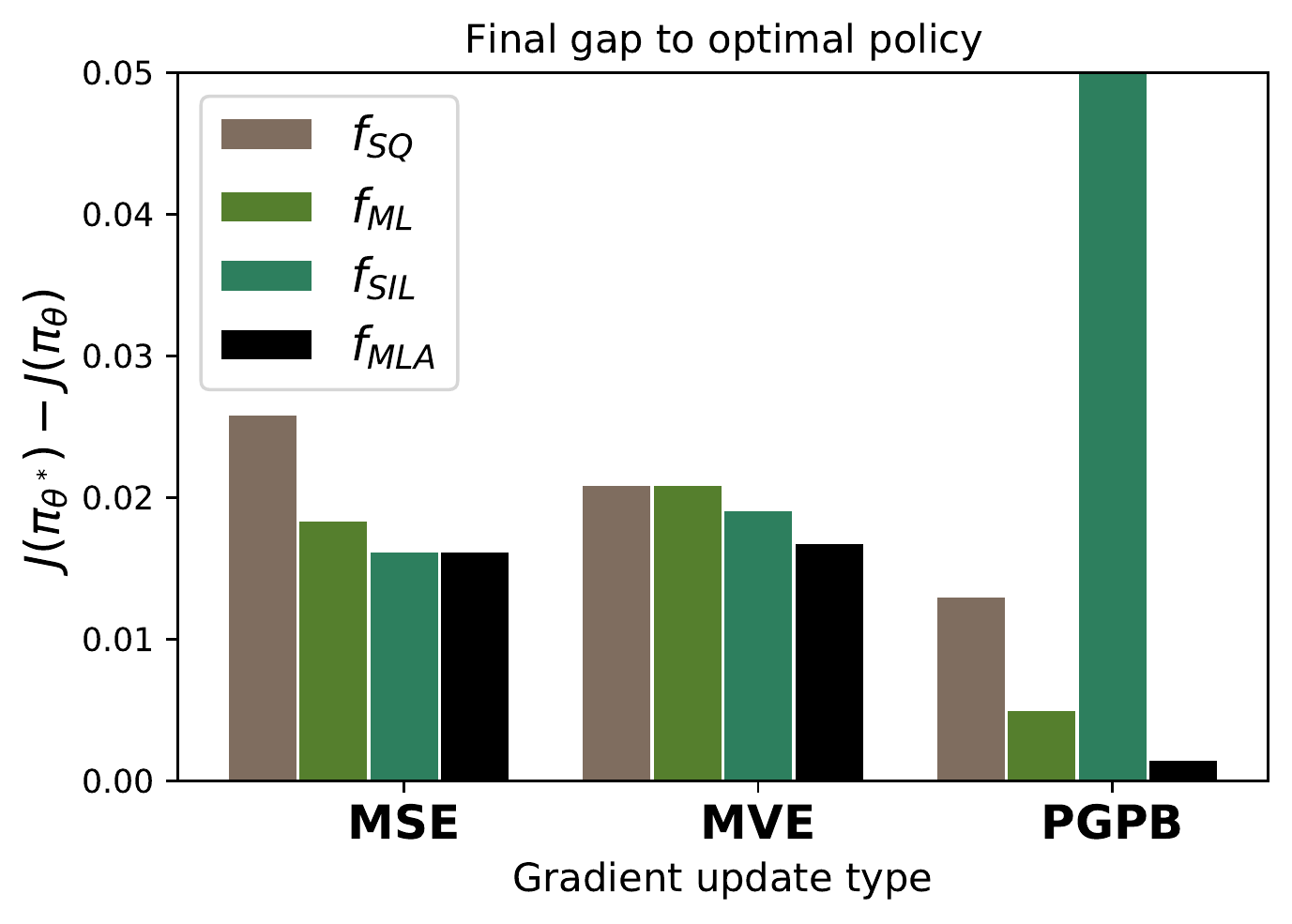}
	\caption{Comparison of update rules on a 2D synthetic bandit problem: Regret ($J(\pi_{\theta^*}) - J(\pi_\theta)$) for each of the 12 update rules considered (lower is better) at the end of 10k iterations. The updates are grouped by the three forms (MSE, MVE, PGPB), and across four scaling functions as listed in Table \ref{tab:table1} in the Appendix.}
	\label{fig:2d_synth}
\end{figure}
To compare the various update rules, trajectories of $\theta$ starting from an initialization of $(0, 0)$ are generated for each of the twelve gradient updates listed in Table \ref{tab:table1}, with more details about the setup provided in the Appendix.
Figure \ref{fig:2d_synth} shows that the best performing update is the combination of the scaling function $f_{MLA}$ with the PGPB style update.
More detailed learning curves including the dynamics of the Euclidean distance to the optimum parameters are provided in the Appendix in Figures \ref{fig:gap}, \ref{fig:err}.
Within each form of the gradient update, the maximum likelihood scaling functions performed better than the other updates in terms of both the speed of convergence as well as the final objective value. 
By contrast, the vanilla squared error objective saturates at a sub-optimal solution though it gets there much faster compared to policy gradient, while the policy gradient version converges to a good final solution, but much slower. 
These experiments show evidence that the proposed class of updates demonstrate the potential to improve both speed and final solution compared to other natural baselines.

\subsection{FourRoom Environment}\label{sec:tabular}

To complement the other experiments, we show that the parametric scaling function can deliver benefits even while ablating the off-policy correction in the FourRoom domain~\citep{precup2000temporal}.

\textbf{Setting:}
The agent obtains a reward of 10 when they reach a pre-defined goal cell, and 0 otherwise in an environment with discount factor $0.9$ from a random initial state. 
We consider a batch/offline setting in which a dataset is collected from a uniformly random behaviour policy. 
The dataset is sufficient to cover the whole state and action space as exploration is not the focus here. 
In each gradient update, a minibatch of 64 transitions is uniformly sampled from the dataset to compute the gradients.
The policy is defined via the logits $q_\theta(s,a)$. 
Additional details are provided in the Appendix.

\textbf{Algorithms and Scaling Functions}:
We investigate the performance of the $f_{MLA(0, \alpha_r)}$ scaled update derived from two baseline updates corresponding to policy gradient and Q-learning. For policy gradient, the target $\hat{T}$ is set to be a critic learned together with the policy/actor as commonly done in the literature \citep{ours21}. 
For Q-learning, the target is bootstrapped as $r(s,a)+\gamma \max_u q_\theta(s',u)$. 
\begin{figure}
	\centering
	\begin{subfigure}{.25\textwidth}
		\centering
		\includegraphics[width=\linewidth]{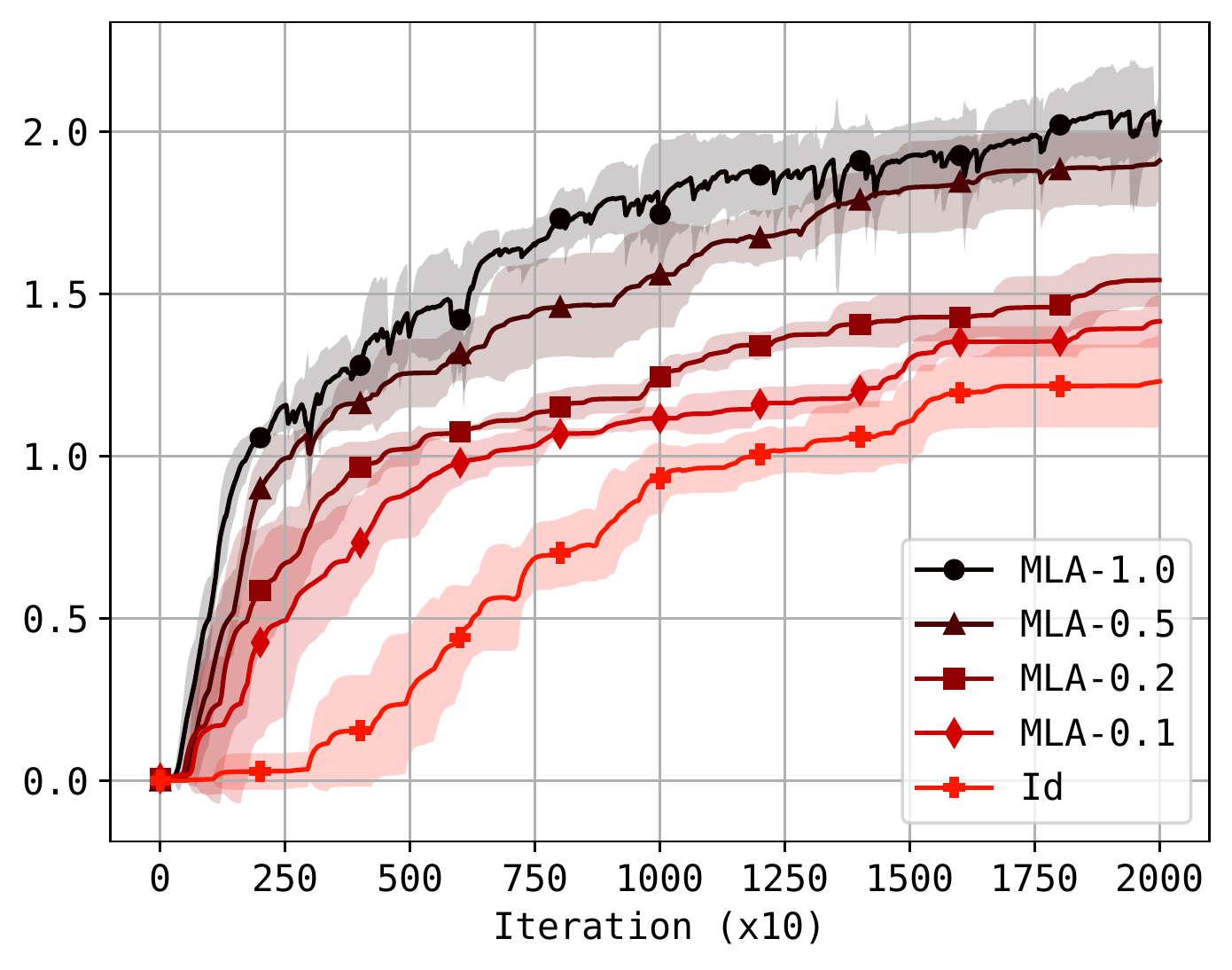}
		\caption{Policy Gradient}
		\label{subfig:fr-pg}
	\end{subfigure}%
	\begin{subfigure}{.25\textwidth}
		\centering
		\includegraphics[width=\linewidth]{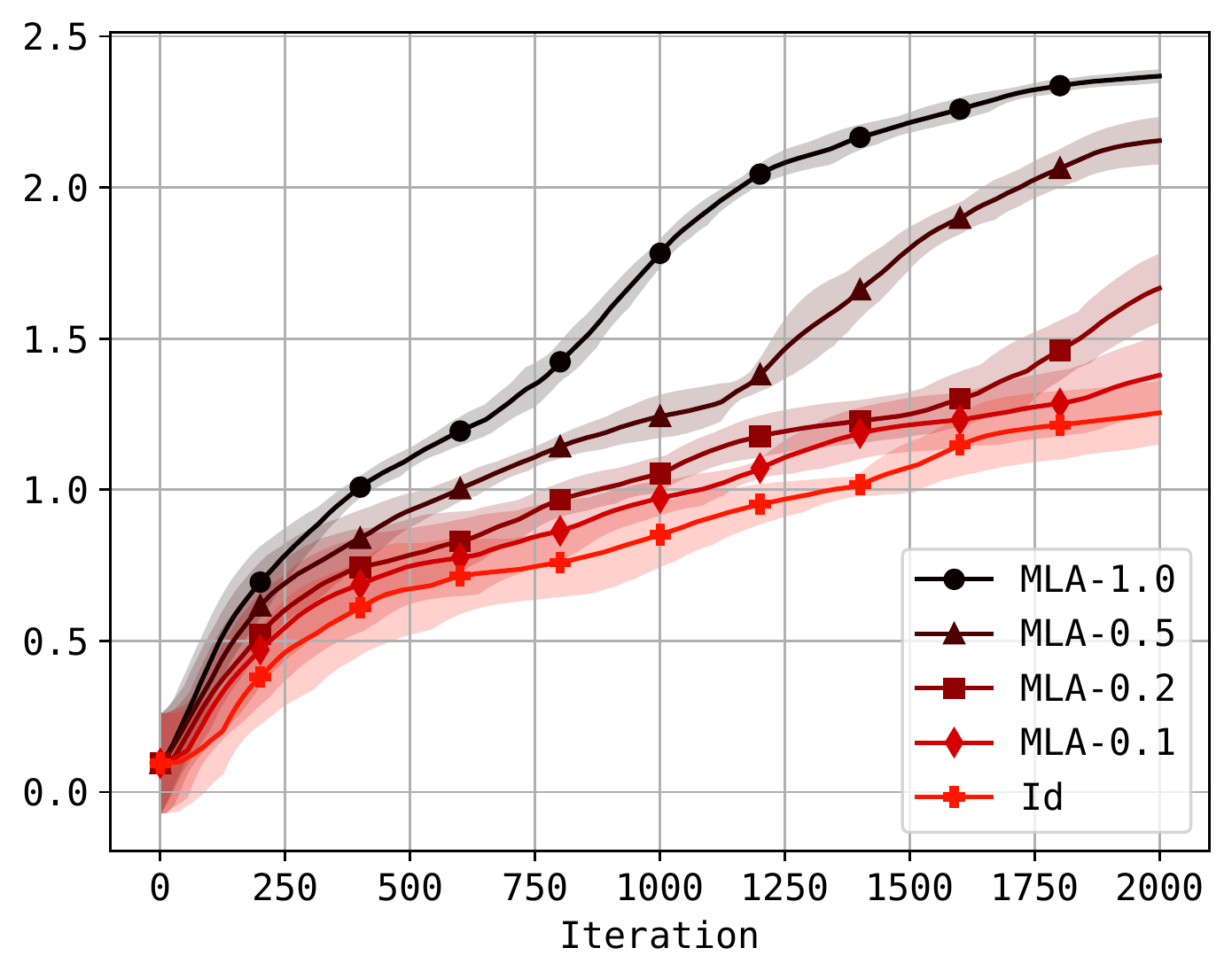}
		\caption{Q-learning}
		\label{subfig:fr-ql}
	\end{subfigure}
	\caption{FourRoom Environment: Return curves for $f_{MLA(0, \alpha)}$ across $\alpha \in \{0.1, 0.2, 0.5, 1.0\}$ against baseline for PG and Q-learning updates}
	\label{fig:fourroom-results}
\end{figure}

\textbf{Results and Discussions:}
The results are shown in Figure~\ref{fig:fourroom-results}, with mean and standard deviation over 5 runs.
The number following MLA indicates the value of $\alpha_r$ ($\alpha_o$ is set to zero).
One can see that MLA consistently improves over the identity scaling for both policy gradient and Q-learning, which are ubiquitous in the RL literature.
As discussed in Sec.~\ref{subsec:parametric_scale}, MLA corresponds to the identity scaling when $\alpha_o=\alpha_r=0$.
Figure~\ref{fig:fourroom-results} shows that the agent learns faster as $\alpha_r$ deviates from zero. Recall that $\alpha_r=0$ recovers the baseline as discussed in Sec.~\ref{subsec:parametric_scale}.
The consistent improvements of MLA for policy gradient and Q-learning with different targets indicate the significance of the scaling function for the gradient update.

\subsection{Continuous Control Experiments}\label{sec:cce}

In this section, benchmarks from the MuJoCo suite are evaluated with Proximal Policy Optimization (PPO) as the baseline algorithm.
Using Theorem \ref{thm:ppo}, which shows that PPO can be equivalently implemented as a special case of the more general update rule, the experiments 
in this section are designed to validate the possibility that a systematic search over the structured scaling function space identified in Section \ref{sec:unified} can deliver non-trivial gains. 
Specifically, we consider the continuous action update rule (\ref{update:cts}) for PPO and 
replace the policy gradient scale component, $e^{\Delta_O} \Delta_R$, with the parameterized variant, $f_{MLA(\alpha_o, \alpha_r)}(\Delta_O, \Delta_R)$ from Equation (\ref{eqn:ml2}) that can span more behaviors. 
The complete expressions for the general parameterized update rule are provided in Appendix \ref{appendix:cce}. 
Under this setting, $(\alpha, \alpha_o, \alpha_r) = (0, 1, 0)$ recovers an update approximately equivalent to vanilla PPO. However, our experiments demonstrate that this is generally not the best choice as seen by the improvements in Figure \ref{fig:mujoco}. 
These results suggest that proposed update form with a flexible scaling parameterization can be a significant lever to improve state of the art algorithms.
\begin{figure}
	\centering
	\begin{subfigure}{.23\textwidth}
		\centering
		\includegraphics[width=.9\linewidth]{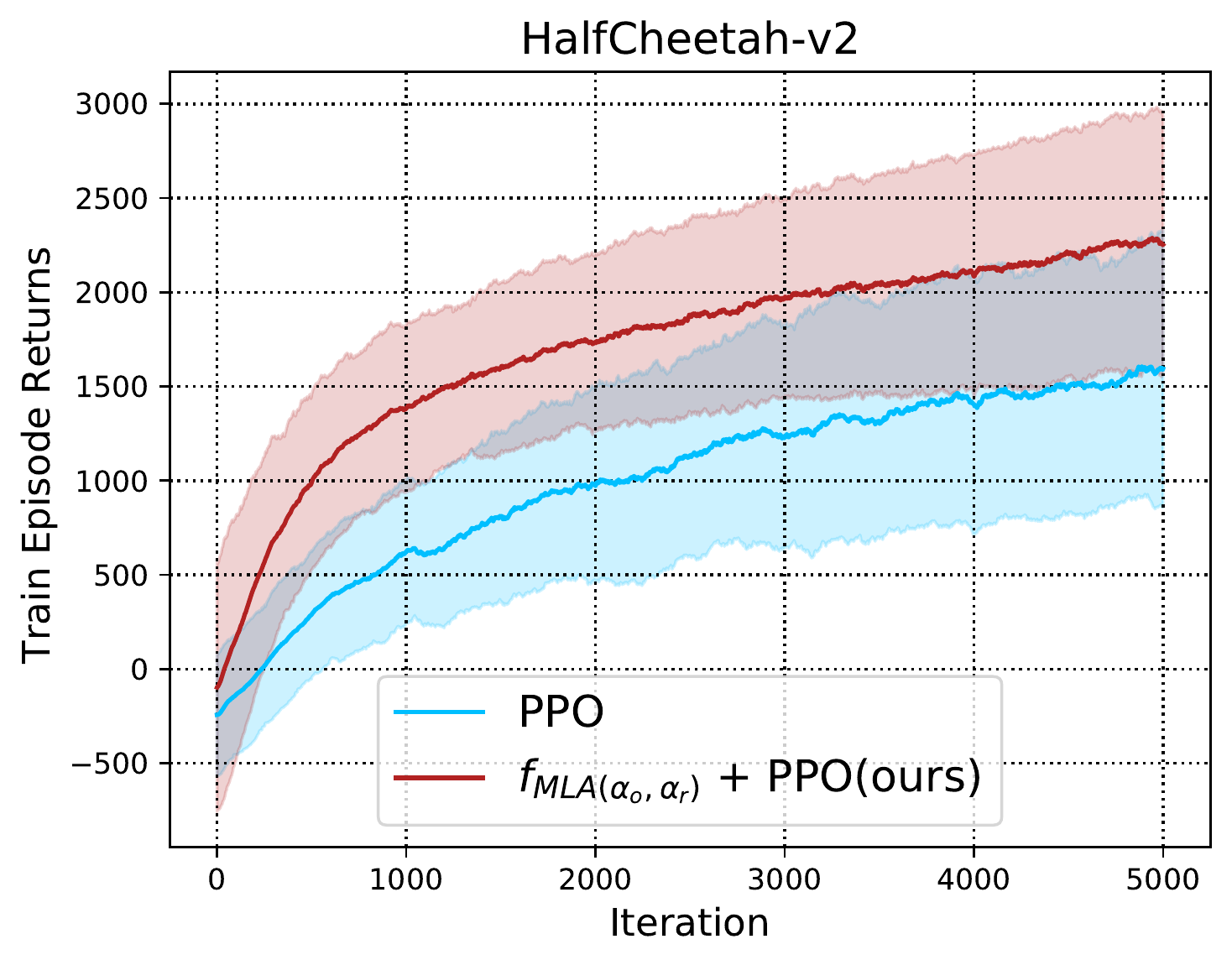}
	\end{subfigure}%
	\begin{subfigure}{.23\textwidth}
		\centering
		\includegraphics[width=.9\linewidth]{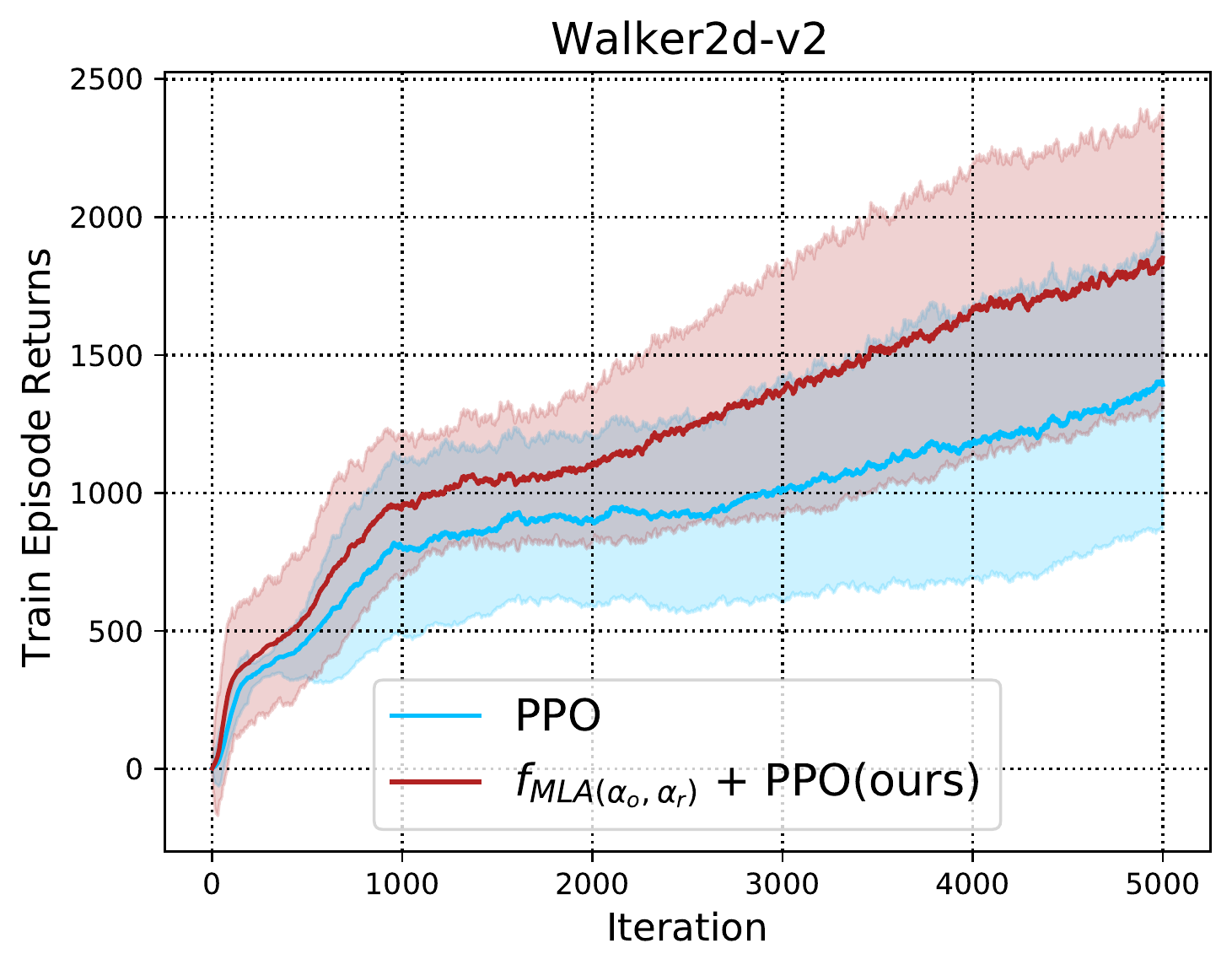}
	\end{subfigure} 
	\newline
	\begin{subfigure}{.23\textwidth}
		\centering
		\includegraphics[width=.9\linewidth]{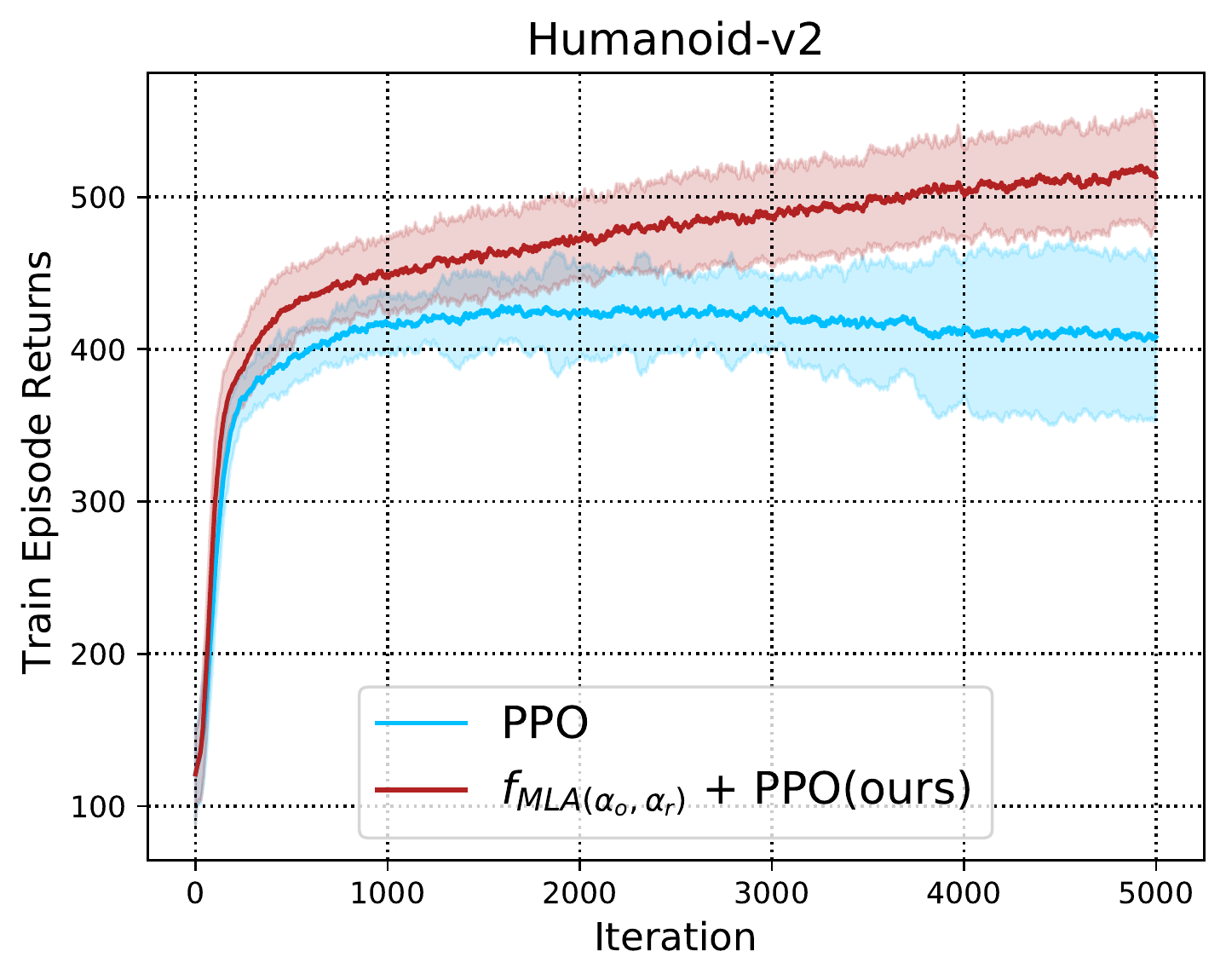}
	\end{subfigure}
	\begin{subfigure}{.23\textwidth}
		\centering
		\includegraphics[width=.9\linewidth]{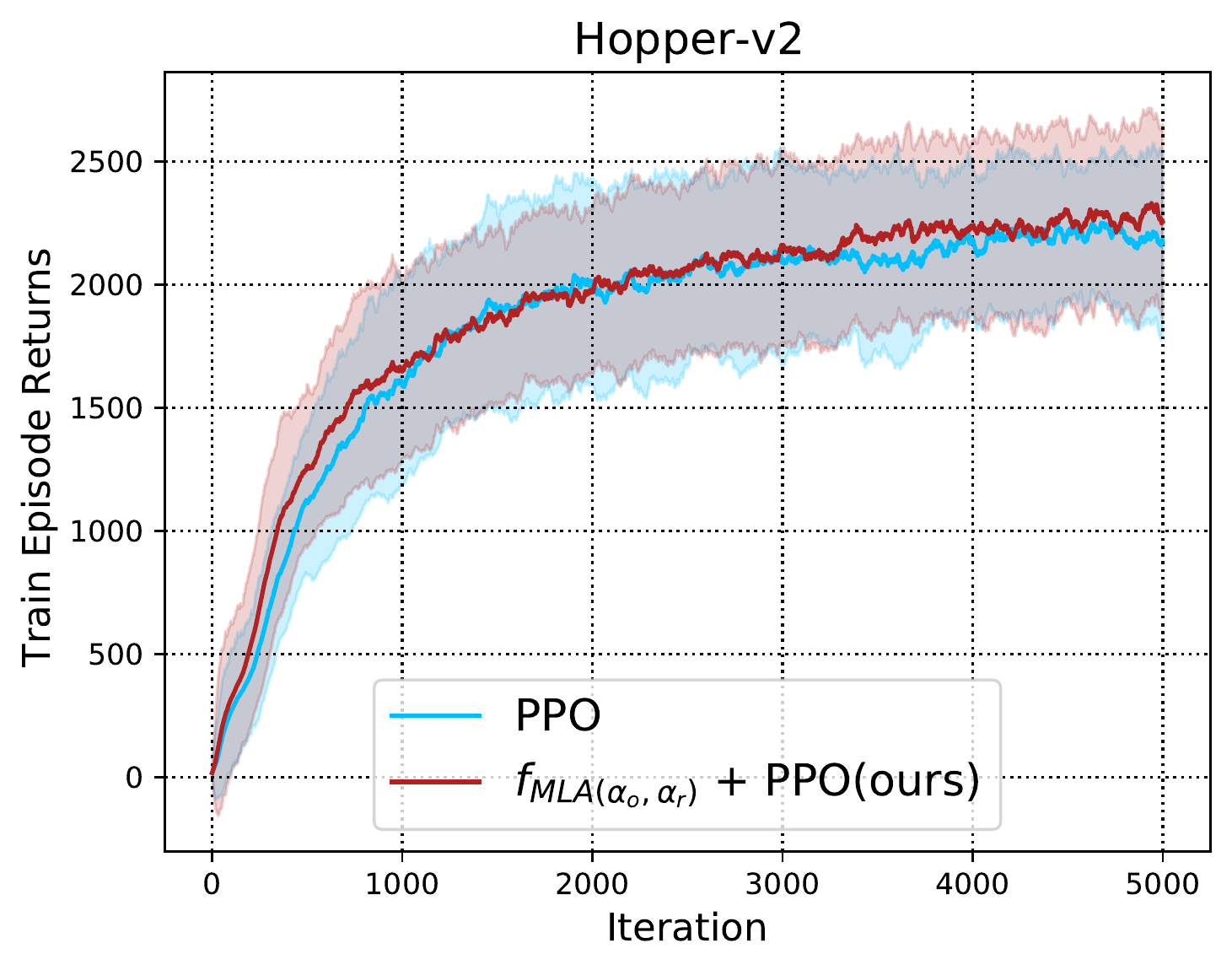}
	\end{subfigure}
	\caption{Results on MuJoCo Environments with mean and standard deviation across 100 seeds comparing $f_{MLA+PPO}$ with vanilla PPO. The x-axis corresponds to a total of 5 million environment steps.}
	\label{fig:mujoco}
\end{figure}
We used the single process agent baseline of PPO in the open source ACME framework \citep{acme} for the experiments.

\section{Related Work}\label{sec:related}

\textbf{Unification of policy and value approaches:}
\citep{pgequiv, pcl, pgqcombine, GMR20} make several connections between policy and value based methods, and propose algorithms that leverage those relationships by interpreting the model as both a policy and a value parameterization. 
Traditional Actor-Critic methods~\cite{peters2008natural, konda2000actor} use value-based methods to learn a critic during policy optimization, maintaining two separate models in the process. 
Recent progress~\cite{ours21} has also led to an improved understanding of how such Actor-Critic algorithms are related to pure PG approaches. 

\textbf{Surrogate objectives and gradients:} Several prior works have also considered surrogate objectives in policy optimization
\citep{surrogate, sil, mapo, ppo, cql, serrano2021logistic}. Directly taking the gradient perspective instead of focusing on objectives has also been a focus in several prior works including \citet{pgqcombine, maei2009convergent}. 
\citet{gpi} has also proposed and studied a general class of gradient updates for policy optimization, instances of which demonstrated empirical improvements over value-based methods.
\citet{vieillard2020momentum,vieillard2020leverage} demonstrate that applying certain form of momentum/averaging to the gradient scale can be beneficial. These are orthogonal to our results, which concern novel variations for the dependence of the scale factor on the sample prediction errors.

\section{Conclusion}\label{sec:conclusion}


In this work, we identify structural similarities among the gradient updates for policy optimization algorithms motivated using various principles, ranging from classical policy gradients and value estimation procedures to modern examples like PPO. 
We show that policy based gradients can be directly contrasted with value based gradients when considering itself as a state-action baseline. 
In particular, they share the same dependence on the sample dependent learning signals that indicate the quality of a given action, but differ in terms of how they make use of certain other sample independent bias terms. 
We also identify a critical gradient scaling dependence function on the sample prediction errors and importance weights that can be used as a lever to span the updates corresponding to a wide variety of policy optimization procedures. 
Based on this insight, we design a simple parametric search space for valid scaling functions that can deliver reliable wins over strong baselines in a range of experimental studies from simple synthetic settings to deep RL benchmarks. 

The proposed framework has broad applicability and relevance to various policy optimization techniques. Exploring connections to adaptive step size methods \citep{dabney14}
is an interesting direction for further research. Applying the proposed parametric updates in applied settings where off policy corrections play an important role in policy optimization \citep{chen19} is another useful direction for future work.
Another interesting direction is to leverage the newly discovered structure in the approximate update space to guide the search space for Auto-RL \cite{parkerholder2022automated} techniques.

\section{Acknowledgements}\label{sec:ack}
We would like to thank Minmin Chen, Yingjie Miao, Yundi Qian and anonymous reviewers for constructive feedback.

\typeout{}
\bibliography{main}
\bibliographystyle{icml2022}

\clearpage
\newpage
\appendix
\onecolumn

\begin{center}
{\huge Appendix}
\end{center}

\section{Proofs} \label{app:proofs}
\pgpb*
\begin{proof}
	Under the given assumptions on $\hat{T}(s, a)$, we have:
\begin{align}
	&\nabla_\theta J^\pi_\mu(\theta)  = \expectation_{(s, a) \sim D_\pi} \left[ \hat{T}(s, a) \nabla_\theta \log{\pi_\theta(a|s)} \right] \\
	&= \expectation_{s \sim d^\pi_\mu} \left[ \expectation_{a|s \sim \pi} \left[ (\hat{T}(s, a) - q_\theta(s, a))\nabla_\theta \log{\pi_\theta(a|s)}  \right] 
		       + \expectation_{u | s \sim \pi} \left[ q_\theta(s, u) \nabla_\theta \log{\pi_\theta(u|s)} \right] \right]\\
	&= \expectation_{s \sim d^\pi_\mu} \left[ \expectation_{a | s \sim \pi} \left[ (\hat{T}(s, a) - q_\theta(s, a))\nabla_\theta \log{\pi_\theta(a|s)}  \right] 
		       + \nabla_\theta \expectation_{u|s \sim \pi_\theta} \left[ \hat{q}_\theta(s, u) \right] \right]\\ \label{eqn:hat}
	&= \expectation_{(s, a) \sim D_\pi} \left[ \Big(\hat{T}(s, a) - q_\theta(s, a)\Big) \nabla_\theta \log{\pi_\theta(u|s)} + \nabla_\theta \expectation_{u|s \sim \pi_\theta} \left[ \hat{q}_\theta(s, u) \right] \right]  \\
	&= \expectation_{(s, a) \sim D_\pi} \left[ \Big(\hat{T}(s, a) - q_\theta(s, a)\Big) \nabla_\theta \log{\pi_\theta(u|s)} - \nabla_\theta H(\pi_\theta(.|s)  \right]  \text{ using Theorem \ref{lemma:entropy}} \\
	&= \expectation_{(s, a) \sim D_\pi} \left[ \hat{G}_{PGPB}(s, a, \theta) \right]
\end{align}
\end{proof}

\entropy*
\begin{proof}
\begin{align*}
&	\nabla_\theta \expectation_{u|s \sim \pi_\theta} \left[ \hat{q}_\theta(s, u) \right] = \sum_u q_\theta(s, u) \nabla_{\theta} \pi_\theta(u|s) \\
											     & = \sum_u \pi_\theta(u|s) q_\theta(s, u) \nabla_\theta \log \pi_\theta(u|s) \\
											     &= \sum_u \pi_\theta(u|s) (\log \pi_\theta(u|s) + F(q_\theta(s))) \nabla_\theta \log \pi_\theta(u|s) \\
											     &= \sum_u \pi_\theta(u|s) \log \pi_\theta(u|s) \nabla_\theta \log \pi_\theta(u|s)  + F(q_\theta(s))  \sum_u \pi_\theta(u|s) \nabla_\theta \log \pi_\theta(u|s)  \\
											     &= \sum_u \pi_\theta(u|s) \log \pi_\theta(u|s) \nabla_\theta \log \pi_\theta(u|s)  + F(q_\theta(s))  \sum_u \nabla_\theta \pi_\theta(u|s) \\
											     &= \sum_u \pi_\theta(u|s) \log \pi_\theta(u|s) \nabla_\theta \log \pi_\theta(u|s)  + F(q_\theta(s))  \nabla_\theta \sum_u \pi_\theta(u|s) \\
											     &= \sum_u \pi_\theta(u|s) \log \pi_\theta(u|s) \nabla_\theta \log \pi_\theta(u|s)  + F(q_\theta(s))  \nabla_\theta 1 \\
											     &= \sum_u \pi_\theta(u|s) \log \pi_\theta(u|s) \nabla_\theta \log \pi_\theta(u|s)  + \nabla_\theta 1 \\
											     &= \sum_u \pi_\theta(u|s) \log \pi_\theta(u|s) \nabla_\theta \log \pi_\theta(u|s)  + \sum_u \pi_\theta(u|s) \nabla_\theta \log \pi_\theta(u|s) \\
											     &= \sum_u \log \pi_\theta(u|s) \nabla_\theta \pi_\theta(u|s)  + \sum_u \pi_\theta(u|s) \nabla_\theta \log \pi_\theta(u|s) \\
											     &= \sum_u \nabla_\theta \left( \pi_\theta(u|s) \log \pi_\theta(u|s) \right) \\
											     & = - \nabla_\theta H(\pi_\theta(.|s) 
\end{align*}
\end{proof}

\msemve*
\begin{proof}
\begin{align}
	G_{MSE}(\theta) &= -  \frac{1}{2} \expectation_{(s, a) \sim \hat{D}^\pi_\mu} 
	\nabla_\theta \left[ ( \hat{T}(s, a) -  q_\theta(s, a) )^2 \right] \nonumber \\
	&= \expectation_{(s, a) \sim \hat{D}^\pi_\mu} 
	\left[ \left(\hat{T}(s, a) - q_\theta(s, a)\right) \nabla_\theta q_\theta(s, a) \right] \nonumber \\
	&= \expectation_{(s, a) \sim \hat{D}^\pi_\mu} \left[ \hat{G}_{MSE}(s, a, \theta) \right] \label{eqn:gmse_eq} 
\end{align}
This proves the statement about $\hat{G}_{MSE}$. For the  claim regarding $\hat{G}_{MVE}$:
\begin{align}
	G_{MVE}(\theta) &= - \nabla_\theta \frac{1}{2} \expectation_{s} \variance_{a}\Big[\hat{T}(s, a) - q_\theta(s, a) \Big] \\
			&= - \frac{1}{2} \expectation_{s} \nabla_\theta \variance_{a|s \sim \hat{\pi}}\Big[\hat{T}(s, a) - q_\theta(s, a) \Big] \\
			&= -\frac{1}{2} \expectation_{s} \left[ \nabla_\theta \expectation_{a|s \sim \hat{\pi}} \left[ (\hat{T}(s, a) - q_\theta(s, a))^2 \right] - \nabla_\theta \left( \expectation_{a|s \sim \hat{\pi}} \left[\hat{T}(s, a) - q_\theta(s, a)\right]\right)^2 \right] \\
			&= \expectation_{s} \left[ \expectation_{a|s \sim \pi} \Big[ \Big(\hat{T}(s, a) - q_\theta(s, a)\Big)\nabla_\theta q_\theta(s, a)\Big] - \expectation_{a|s \sim \pi} \Big[\hat{T}(s, a) - q_\theta(s, a) \Big] \expectation_{u|s \sim \pi} \Big[ \nabla_\theta q_\theta(s, u) \Big] \right] \\
			&= \expectation_{(s, a)}\left[ \Big(\hat{T}(s, a) - q_\theta(s, a)\Big)\Big(\nabla_\theta q_\theta(s, a) - \expectation_{u|s \sim \pi} \left[ \nabla_\theta q_\theta(s, u) \right]\Big) \right] \\
			&= \expectation_{(s, a)}\left[ \Big(\hat{T}(s, a) - q_\theta(s, a)\Big) \nabla_\theta \log \pi_\theta(a|s) \right] \\
			&= \expectation_{(s, a)} \left[ \hat{G}_{MVE}(s, a, \theta) \right]
\end{align}
\end{proof}

\ppo*

\begin{proof}
Assume that the batch of sample data is generated from a policy $\pi_b$ with parameters $b$.
Let $\hat{G}_{PPO}$ denote the PPO surrogate objective sample gradient. This gradient can be simplified as follows to recover the scaling function equivalence.
The PPO surrogate objective is defined for a given clipping parameter $\epsilon$ as:
$$J_{PPO}(s, a, \theta) = \left(\frac{\pi_\theta(a|s)}{\pi_b(a|s)}\hat{A}, \text{clip}\left(\frac{\pi_\theta(a|s)}{\pi_b(a|s)}, 1-\epsilon, 1 + \epsilon\right) \hat{A} \right)  $$
We will focus on computing the gradient, $\hat{G}_{PPO}(s, a, \theta) = \nabla_\theta J_{PPO}(s, a, \theta)$ below.
Note that $\Delta_O = \log \frac{\pi_\theta(a|s)}{\pi_b(a|s)}$, and $\Delta_R = \hat{A}$ under the specified assumptions. Therefore, 
\begin{align}
	& \hat{G}_{PPO}(s, a, \theta) = \nabla_\theta \left( \min \left(\frac{\pi_\theta(a|s)}{\pi_b(a|s)}\Delta_R, \text{clip}\left(\frac{\pi_\theta(a|s)}{\pi_b(a|s)}, 1-\epsilon, 1 + \epsilon\right) \Delta_R \right) \right) \\
	&= \nabla_\theta \left( \mathbbm{1}_{\Delta_R > 0} \min \left( \frac{\pi_\theta(a|s)}{\pi_b(a|s)}, 1 + \epsilon \right) \Delta_R + \mathbbm{1}_{\Delta_R < 0} \max \left( 1- \epsilon, \frac{\pi_\theta(a|s)}{\pi_b(a|s)} \right) \Delta_R \right) \\
	 &= \nabla_\theta \Big( \left( \mathbbm{1}_{\Delta_R > 0} \mathbbm{1}_{\Delta_O < \log(1+\epsilon)} + \mathbbm{1}_{\Delta_R < 0} \mathbbm{1}_{\Delta_O > \log(1 + \epsilon)}\right) \frac{\pi_\theta(a|s)}{\pi_b(a|s)} \Delta_R ~~ + \nonumber \\
	 & \hspace{1in} (1+ \epsilon) \mathbbm{1}_{\Delta_R > 0} \mathbbm{1}_{\Delta_O \ge \log(1+\epsilon)} \Delta_R + (1 - \epsilon) \mathbbm{1}_{\Delta_R < 0} \mathbbm{1}_{\Delta_O \le \log(1 - \epsilon)} \Delta_R \Big) \\
	 &= \left( \mathbbm{1}_{\Delta_R > 0} \mathbbm{1}_{\Delta_O < \log(1+\epsilon)} + \mathbbm{1}_{\Delta_R < 0} \mathbbm{1}_{\Delta_O > \log(1 + \epsilon)}\right) \nabla_\theta \frac{\pi_\theta(a|s)}{\pi_b(a|s)} \Delta_R + \textbf{0}  ~~~(\because \text{ no gradient from the second term.}) \\
	 &= \tau_\epsilon(\Delta_O, \Delta_R) \frac{\pi_\theta(a|s)}{\pi_b(a|s)} \nabla_\theta \log \pi_\theta(a|s)  \Delta_R \\
	 &= \tau_\epsilon(\Delta_O, \Delta_R) e^{\Delta_O} \Delta_R \nabla_\theta \log \pi_\theta(a|s)
\end{align}
\end{proof}

\subsection{Design for the parametric scaling function approximations}
Consider the following lower bound to $e^{\Delta_O}(e^{\Delta_R} - 1)$ which is also exact upto second order around $(\Delta_O, \Delta_R) = (0, 0)$. Let $X \sim U[\Delta_O, \Delta_O + \Delta_R]$ be a continuous RV with uniform density. Then:
$$\expectation[e^X] = \int_{\Delta_O}^{\Delta_O + \Delta_R}
\frac{e^x}{\Delta_R} dx = \frac{e^{\Delta_O + \Delta_R} -
e^{\Delta_O}}{\Delta_R} \mbox{ and } e^{\expectation[X]} = e^{\Delta_O + \frac{\Delta_R}{2}}$$
Jensen's inequality implies that
$$0 < e^{\Delta_O + \frac{\Delta_R}{2}} \le \frac{e^{\Delta_O + \Delta_R} - e^{\Delta_O}}{\Delta_R} ~~ \forall ~~ \Delta_O, \Delta_R \in \mathbb{R} $$
Consider the following approximation inspired by the above inequality:
\begin{equation} \label{eqn:japprox}
e^{\Delta_O + \frac{\Delta_R}{2}} \approx \frac{e^{\Delta_O + \Delta_R} - e^{\Delta_O}}{\Delta_R} 
\end{equation}
We then consider the following sequence of approximations:
\begin{align}
	e^{\Delta_O}(e^{\Delta_R} - 1) &= e^{\Delta_O + \Delta_R} - e^{\Delta_O} \\
				       &\approx \Delta_R e^{\Delta_O + \frac{\Delta_R}{2}} ~~ \mbox{ Using Eq } \ref{eqn:japprox}\\
				       &\ge \Delta_R \max \left(1 + \Delta_O + \frac{\Delta_R}{2}, 0\right) ~~ \mbox{ since $e^x \ge \max(1 + x, 0) ~~ \forall x \in \mathbb{R}$ } \label{eqn:approxd1}
\end{align}
Unfortunately, $f_0(\Delta_O, \Delta_R) \triangleq \Delta_R \max \left(1 + \Delta_O + \frac{\Delta_R}{2}, 0\right)$ does not satisfy the monotonicity constraint in $\Delta_R$ listed in \ref{assumption:constraints} in all cases.
To check this, we can verify that it is non-decreasing (and also convex) in $\Delta_R$ if and only if $1 + \Delta_O \le 0$.
Equation (\ref{eqn:ml1}) makes a fix to $f_0$ to ensure that the monotonicity constraints from Assumption \ref{assumption:constraints} hold while also remaining a good second order approximation when $\Delta_O, \Delta_R$ are small. 

\textbf{Fix for non-monotonicity($f_{MLA}$):} First notice that the non-monotonicity only happens when $1 + \Delta_O > 0$. 
In that case, the minimum of the quadratic in $y$, $q(x, y) = y (1 + x + y/2)$ occurs for $y_{min}(x) \triangleq - (1 + x) < 0$. 
However, in this case $1 + x + y_{min}(x)/ 2 = (1 + x)/2 > 0$.
So $f(x, y) = q(x, y)$ in a left neighborhood of $y_{min}(x) = -(1 + x)$, which means it must be decreasing in $y$ in that range. 
To avoid this, we make a fix $f(x, y) \triangleq q(x, y_{min}(x)) = -(1 + x)^2/2$, when $y < y_{min}(x) = - (1 + x)$ and $1 + x > 0$.  The resulting $f(x, y)$ is the definition of $f_{MLA}(x, y)$ in Equation (\ref{eqn:ml1}).

\textbf{Derivation of $f_{MLA(\alpha_0, \alpha_r)}$:}
In the policy parameterization, we did not include any temperature parameter for simplicity. However, by considering a generalization of these as free parameters, 
we may consider scaling functions where the inputs are replaced with $\alpha_o \Delta_O$ and $\alpha_r \Delta_R$ respectively. 
The same sequence of approximations resulting in Equation (\ref{eqn:approxd1}) now result in the constraint violating function\footnote{$(x)_+$ denotes $\max(x, 0)$} $f_1(x, y) = \alpha_r y(1 + \alpha_o x + \alpha_r y / 2)_+$. 
Without loss of generality, this is equivalent to $f_1(x, y) = y(1 + \alpha_o x + \alpha_r y)_+$, 
because (1) the constant $\alpha_r$ scaling the outer $y$ has no impact on the overall update, and (2) we can redefine the remaining $\alpha_r$ inside to include the 1/2 for simplicity.
We are now left with fixing the non-monotonicity of $f_1(x, y) = y\max(1 + \alpha_0 x + \alpha_r y, 0)$ for certain ranges of $y$ when $1 + \alpha_0 x < 0$. 
However, we do this slightly differently compared to how it was done for $f_{MLA}$ above, in order to preserve the linear component  in $y$, 
as this allows for a more intuitive understanding as being the component that matches the identity scaling function. \\

\textbf{Fix for non-monotonicity:}  As before, consider the quadratic in $y$, $q(x, y) \triangleq y(1 + \alpha_o x + \alpha_r y)$ 
and its argmin 
$y_{min}(x) = \frac{-(1 + \alpha_o x)}{2\alpha_r}$ in the problematic case when $1 + \alpha_o x > 0$. Observe that when $y=y_{min}$, we have $1 + \alpha_o x + \alpha_r y = \frac{1+ \alpha_o x}{2}$. Recall that the problem
occurs due to the quadratic dependence on $y$ from the second term for $y < y_{min}$. 
Instead of completely eliminating the dependence on $y$, we can continue a monotonic linear relationship by simply clipping the second term below by the value it takes at $y_{min}$ which is
$1 + \alpha_o x + \alpha_r y = \frac{1+ \alpha_o x}{2}$ . 
However, recall that we should only perform this clipping of the second term below by $\frac{1 + \alpha_o x}{2}$
in the case when $1 + \alpha_o x > 0$. 
This can be done simply by clipping the second term by $\frac{(1 + \alpha_o x)_+}{2}$ as this reduces to the original floor of $0$ in the case where no fix was needed.
This results in the proposed $f_{MLA(\alpha_o, \alpha_r)}(x, y) = y \max \left(1 + \alpha_o x + \alpha_r y, \frac{(1 + \alpha_o x)_+}{2}\right)$.

\begin{figure}
	\centering
	\begin{subfigure}{.45\textwidth}
		\centering
		\includegraphics[width=.9\linewidth]{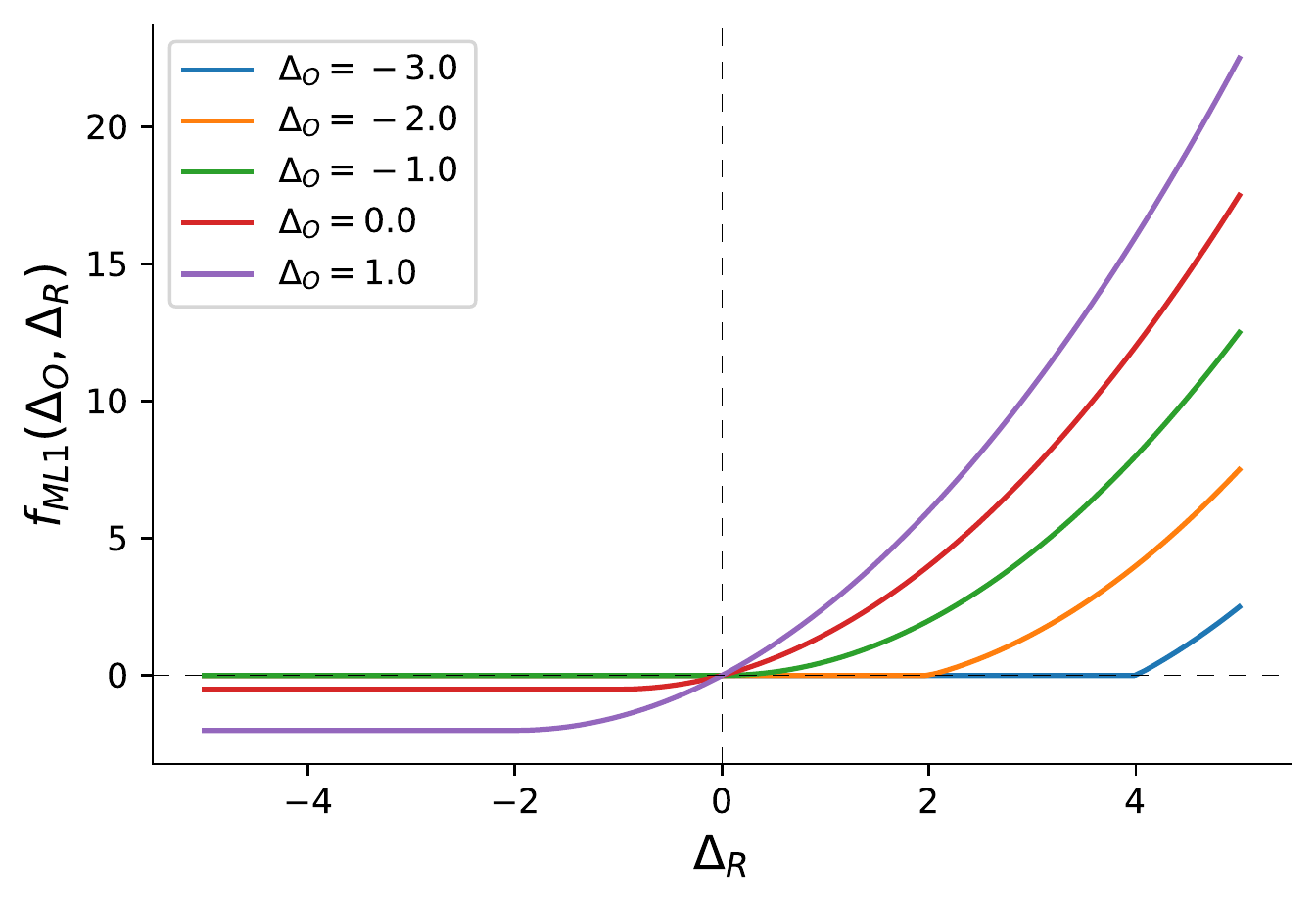}
		\caption{Projections on $\Delta_R$}
		\label{fig:slice_r_1}
	\end{subfigure}%
	\begin{subfigure}{.45\textwidth}
		\centering
		\includegraphics[width=.9\linewidth]{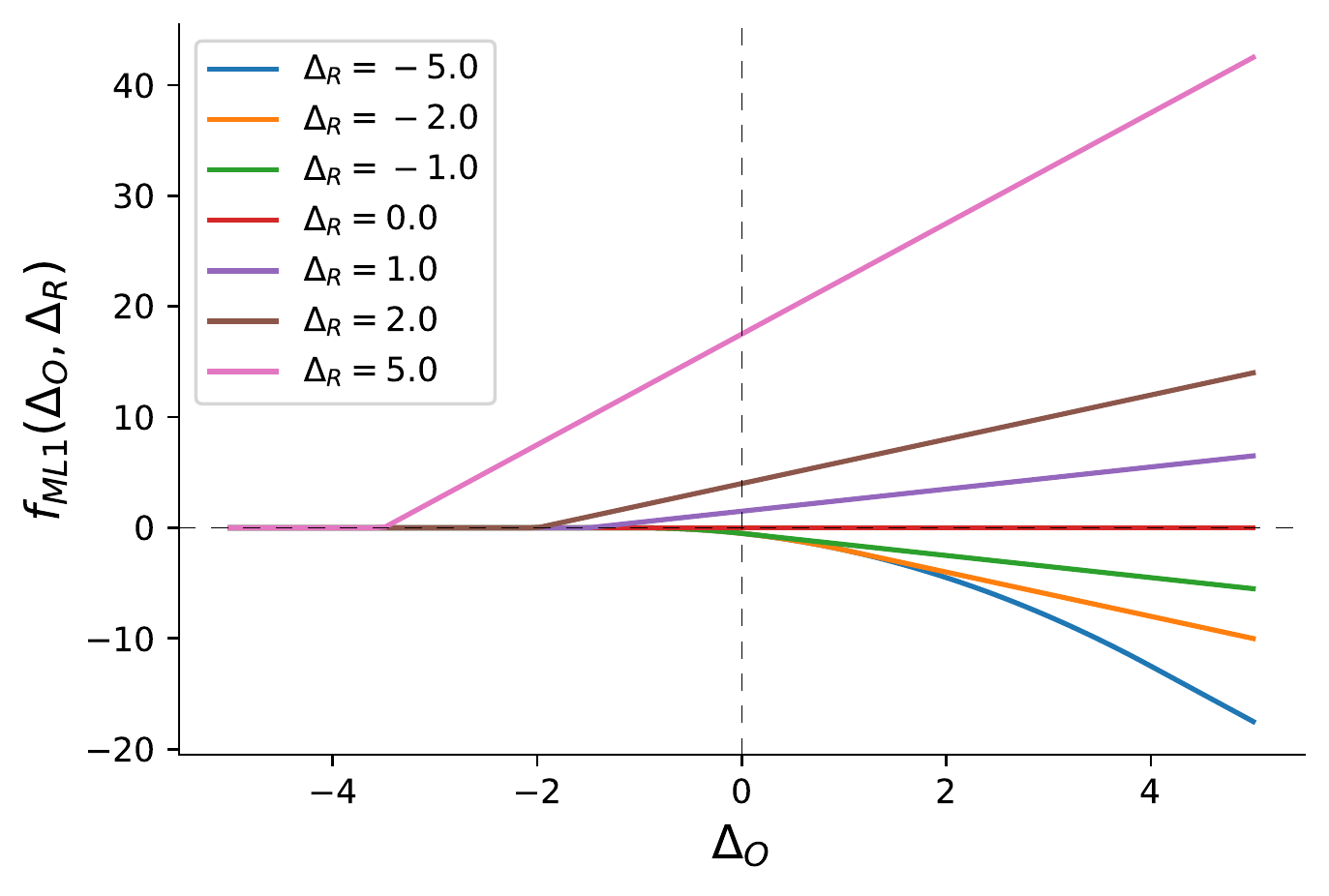}
		\caption{Projections on $\Delta_O$}
		\label{fig:slice_o_1}
	\end{subfigure}
	\caption{Visualization for $f_{MLA}(\Delta_O, \Delta_R)$ (Equation \ref{eqn:ml1}) }
	\label{fig:ml1}
\end{figure}
\begin{figure}
	\centering
	\begin{subfigure}{.45\textwidth}
		\centering
		\includegraphics[width=.9\linewidth]{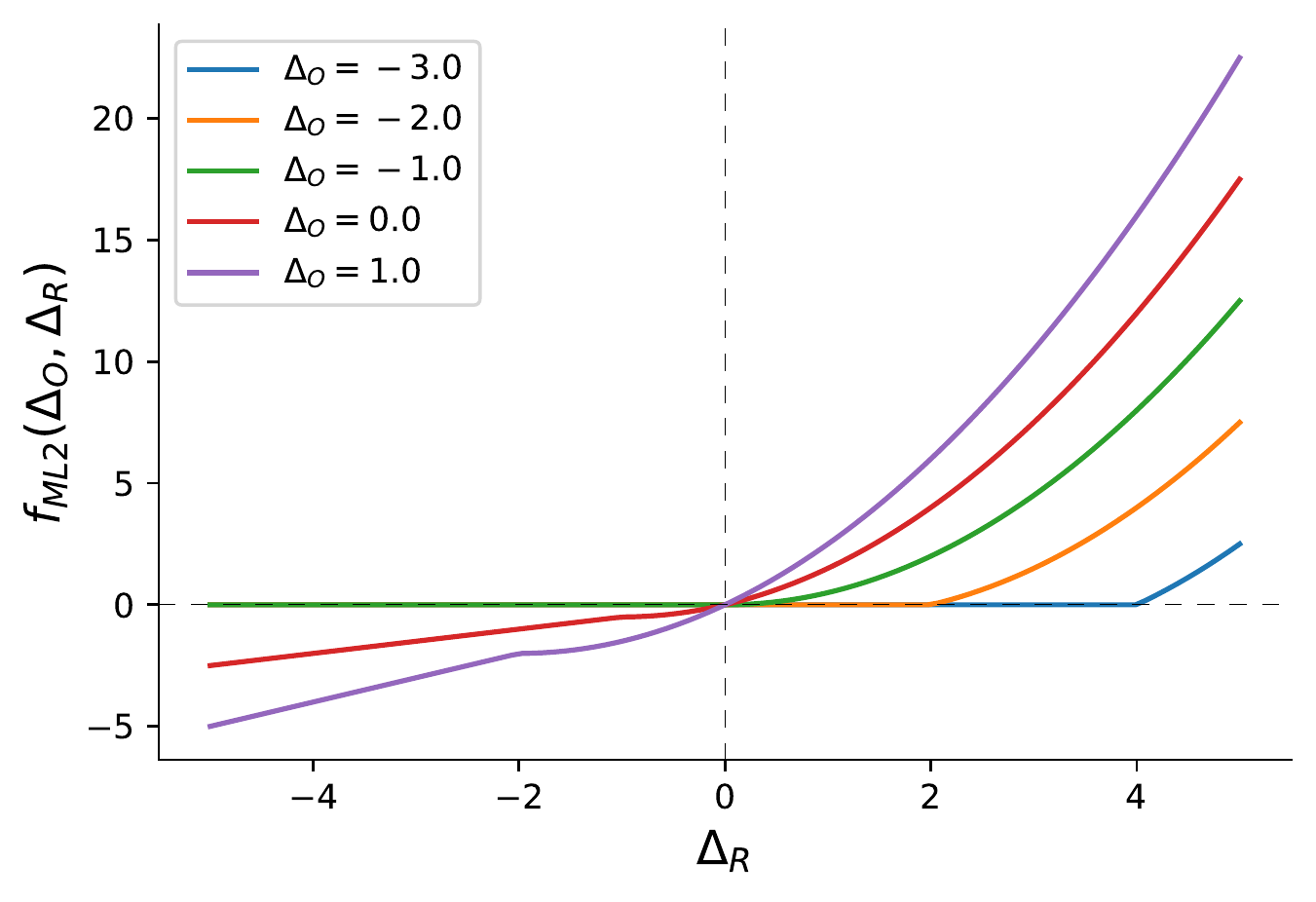}
		\caption{Projections on $\Delta_R$}
		\label{fig:slice_r_2}
	\end{subfigure}%
	\begin{subfigure}{.45\textwidth}
		\centering
		\includegraphics[width=.9\linewidth]{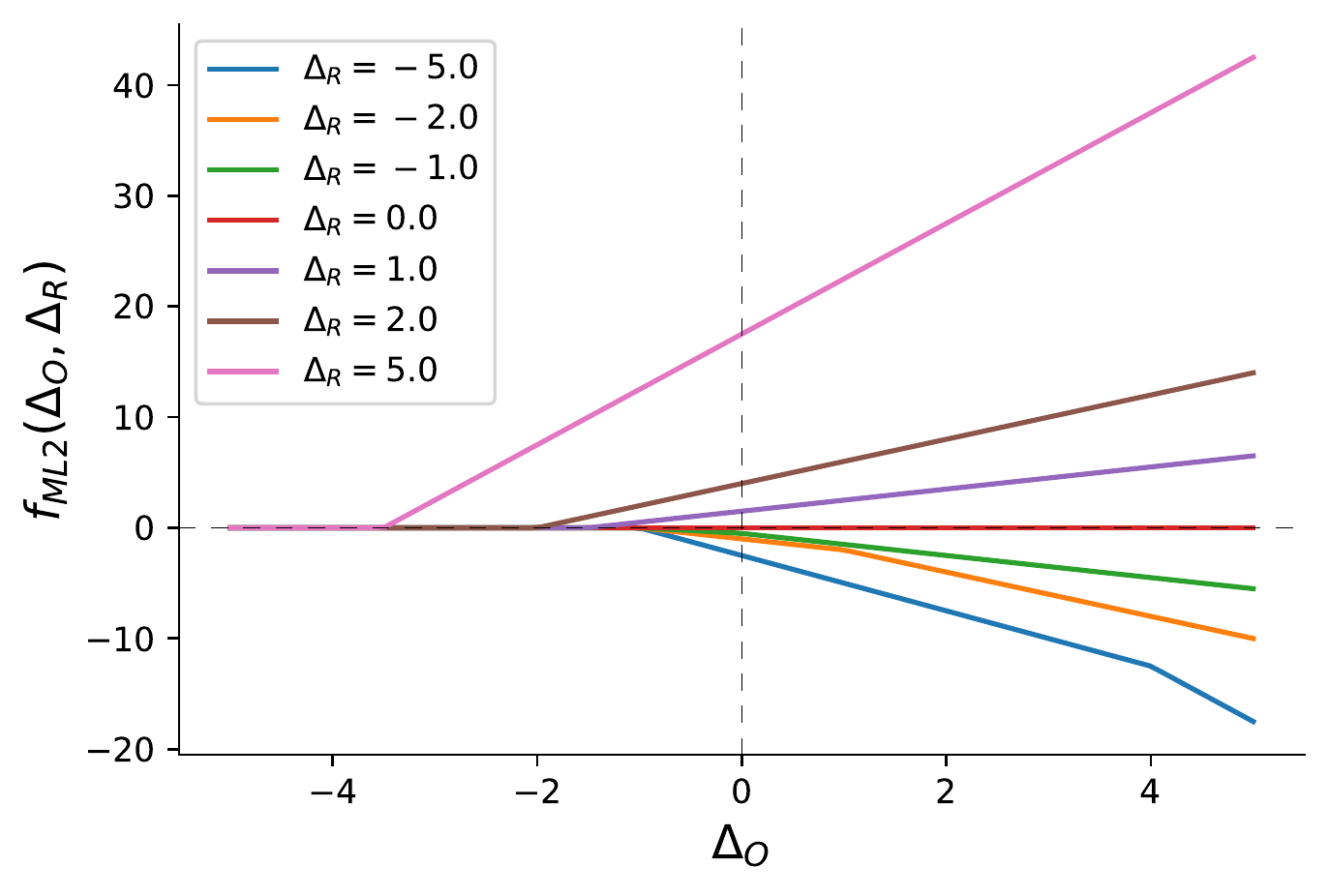}
		\caption{Projections on $\Delta_O$}
		\label{fig:slice_o_2}
	\end{subfigure}
	\caption{Visualization for $f_{MLA(1, 0.5)}(\Delta_O, \Delta_R)$ (Equation \ref{eqn:ml2} for $\alpha_o = 1, \alpha_r=0.5$.)}
	\label{fig:ml2}
\end{figure}

\begin{figure}
	\centering
	\begin{subfigure}{0.45\textwidth}
		\centering
		\includegraphics[width=.9\linewidth]{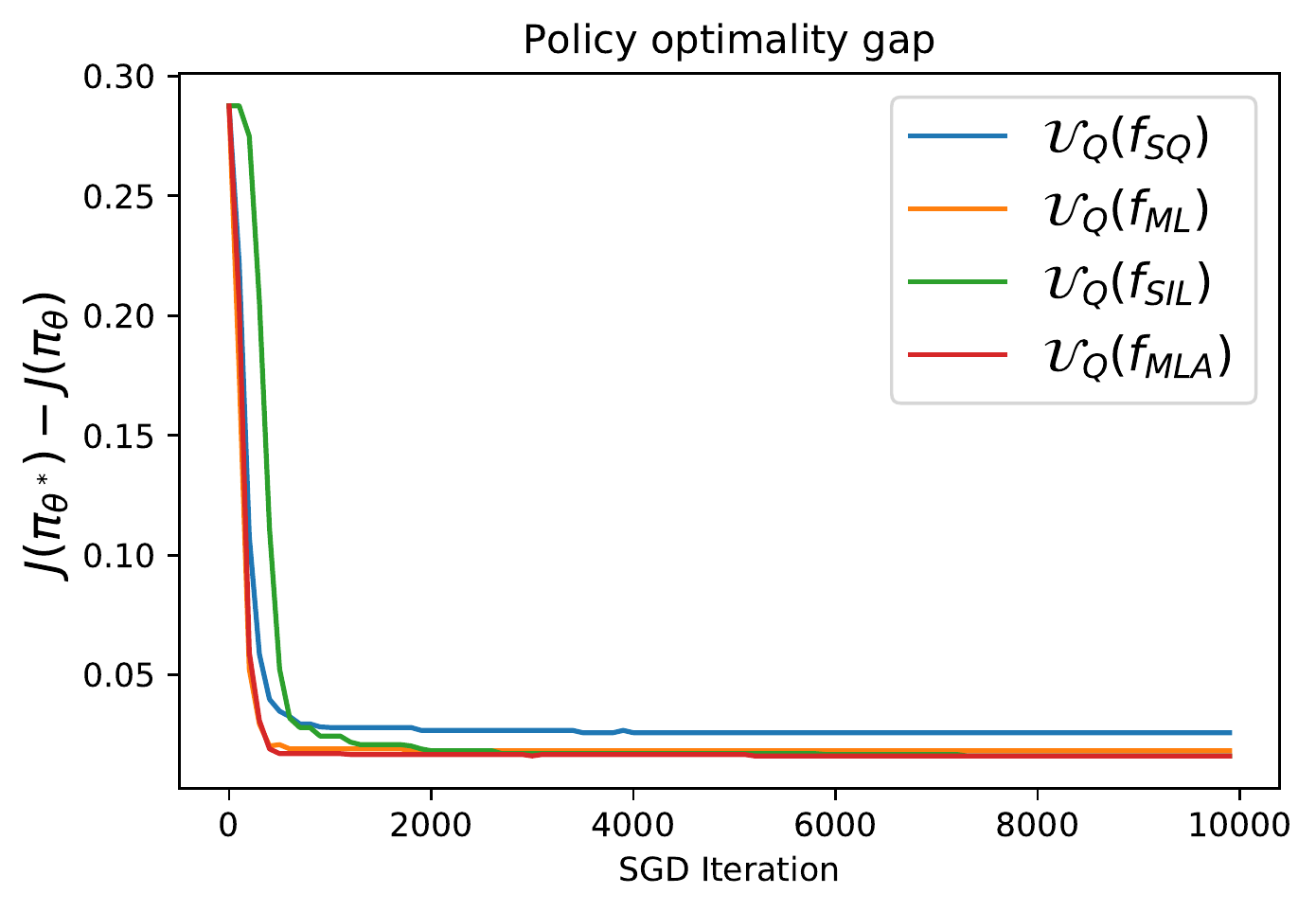}
		\caption{Learning curves for $\mathcal{U}_Q(f)$: Col 1, Table \ref{tab:table1}}
		\label{fig:gap_all}
	\end{subfigure}%
	\begin{subfigure}{.45\textwidth}
		\centering
		\includegraphics[width=.9\linewidth]{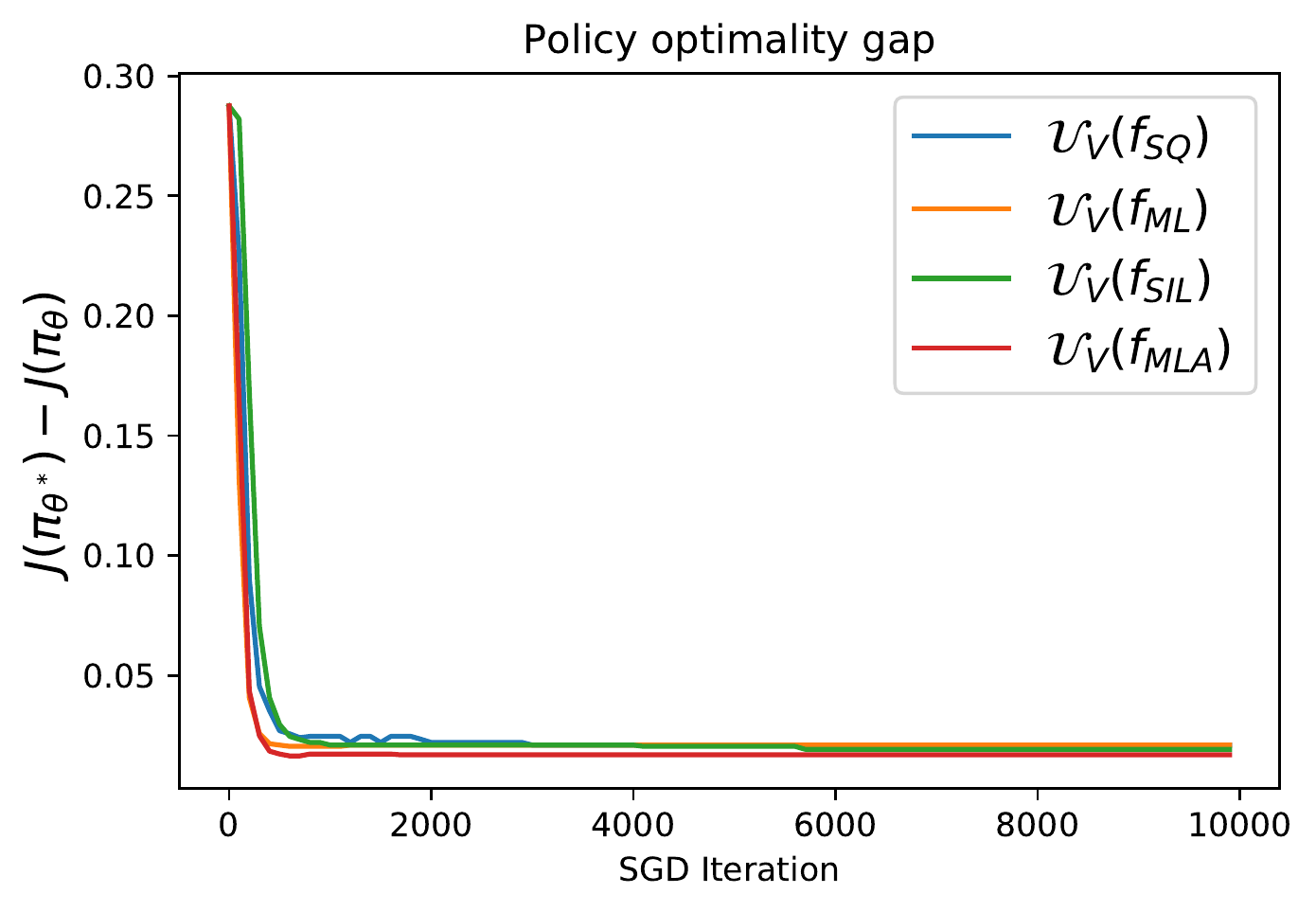}
		\caption{Learning curves for $\mathcal{U}_V(f)$: Col 2, Table \ref{tab:table1}}
		\label{fig:gap_sq}
	\end{subfigure}
	\begin{subfigure}{.45\textwidth}
		\centering
		\includegraphics[width=.9\linewidth]{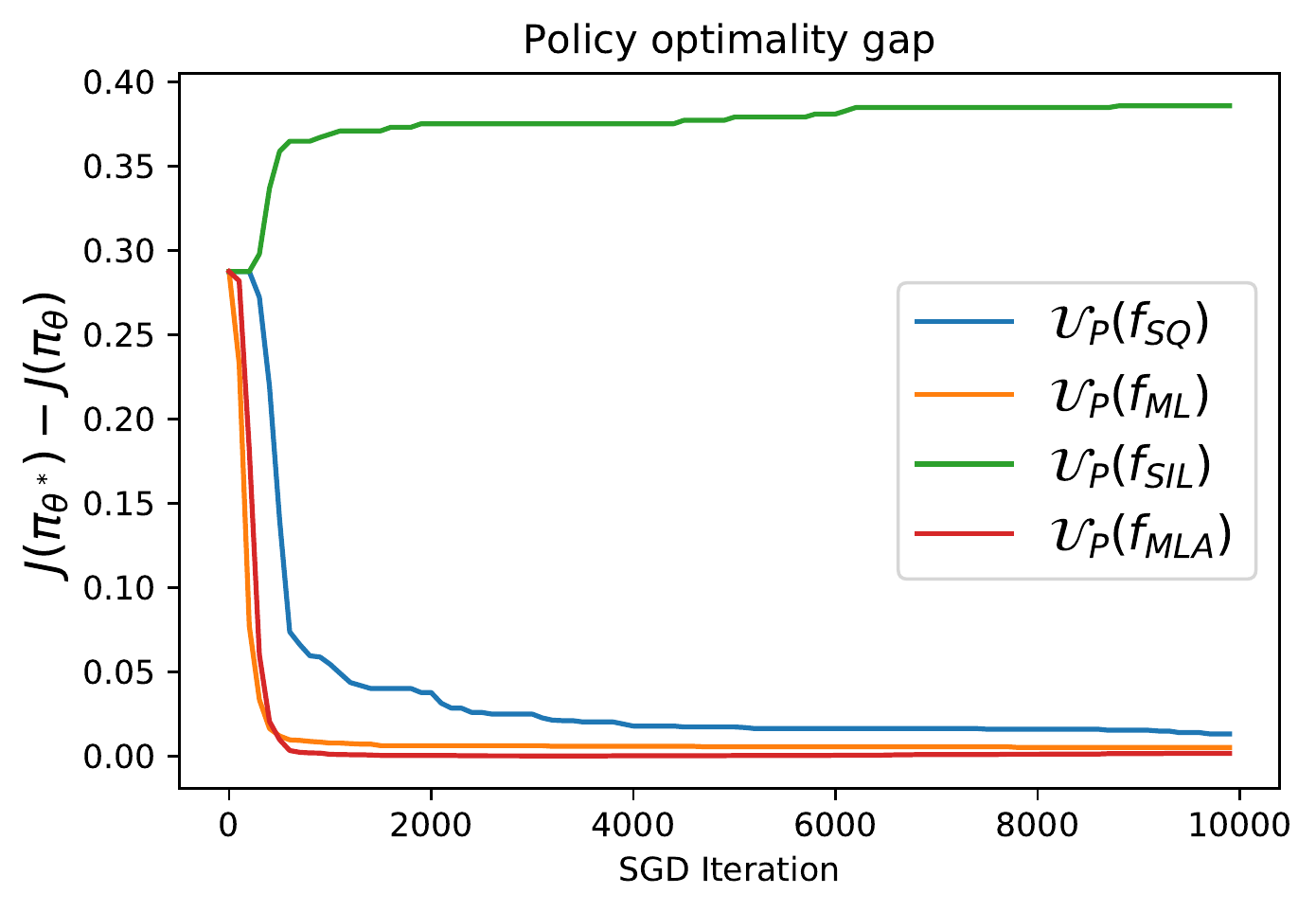}
		\caption{Learning curves for $\mathcal{U}_{P}(f)$: Col 3, Table \ref{tab:table1}}
		\label{fig:gap_pgpb}
	\end{subfigure}
	\caption{Gap to the optimal policy reward for the twelve updates listed in Table \ref{tab:table1} grouped by the update type in each column, and compared across scaling functions $f \in \{f_{SQ}, f_{ML}, f_{SIL}, f_{MLA}\}$}
	\label{fig:gap}
\end{figure}

\begin{figure}
	\centering
	\begin{minipage}{.6\textwidth}
		\centering
		\includegraphics[width=0.9\linewidth]{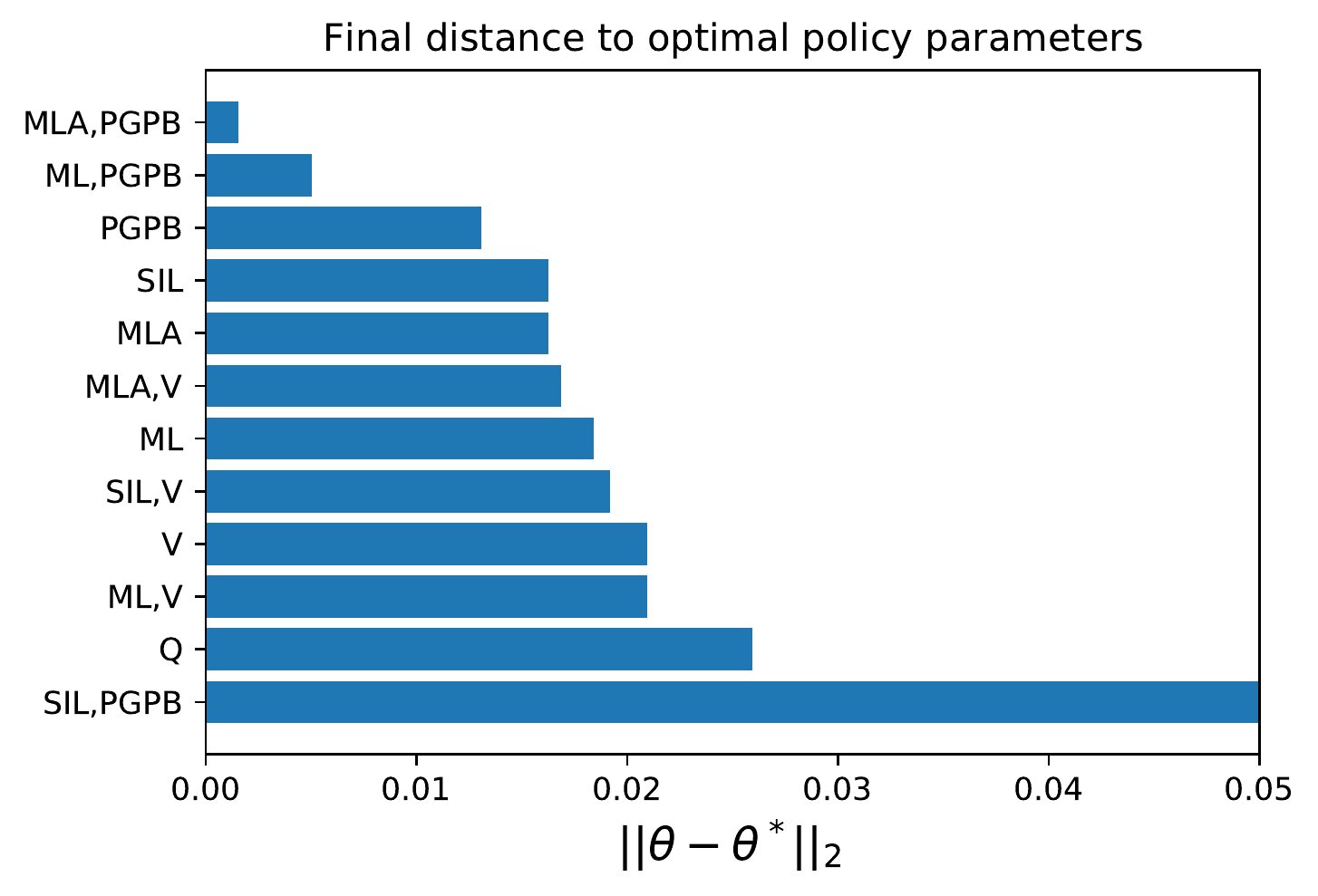}
		\caption{Final Euclidean distance to the optimal parameters, $\theta^* = (1, 1)$ for the tweleve update rules considered.}
		\label{fig:err}
	\end{minipage}%
\end{figure}

\section{Table of update rule combinations evaluated for the 2D synthetic experiment}
The full range of gradients characterized are summarized in Table~\ref{tab:table1}, which considers the form-axis along the columns for the three types of updates (squared error minimization, variance minimzation and PGPB), in conjunction with several scaling functions along the row axis.  
\begin{table}[!ht]
\begin{center}
\caption{A summary of the various updates evaluated in the synthetic 2D experiment.}
	\renewcommand{\arraystretch}{1.5}
\label{tab:table1}
	\begin{tabular}{|c|c|c|c|}
		\hline
		\multicolumn{4}{|l|}{$\mathcal{U}_{Q}(f) \equiv f(\Delta_O,\Delta_R)\Big( \nabla_\theta q_\theta(s, a) \Big)$} \\
		\multicolumn{4}{|l|}{$\mathcal{U}_{V}(f) \equiv f(\Delta_O,\Delta_R) \Big(\nabla_\theta q_\theta(s, a) - \expectation_{u|s \sim \pi} \left[ \nabla_\theta q_\theta(s, u) \right]\Big)$ }\\
		\multicolumn{4}{|l|}{$\mathcal{U}_{P}(f) \equiv f(\Delta_O,\Delta_R) \Big(\nabla_\theta q_\theta(s, a) - \expectation_{u|s \sim \pi} \left[ \nabla_\theta q_\theta(s, u) \right]\Big) + \nabla_\theta \expectation_{u|s \sim \pi_\theta} \left[ \hat{q}_\theta(s, u) \right]$} \\
		\hline
		\hline
		$f$ & $\mathcal{U}_Q(f)$ & $\mathcal{U}_V(f)$ & $\mathcal{U}_P(f)$\\
		\hline
		$f_{SQ}(x, y) \triangleq e^x y$ & $\hat{G}_{Q}$& $\hat{G}_{V}$& $\hat{G}_{PGPB}$ \\
		\hline
		$f_{ML}(x, y) \triangleq e^x (e^y - 1) $ & $\hat{G}_{ML}$ & $\hat{G}_{ML, V}$ & $\hat{G}_{ML, PGPB}$ \\
		\hline
		$f_{SIL}(x, y) \triangleq e^{x} \max \left(y, 0\right)$ & $\hat{G}_{SIL}$ & $\hat{G}_{SIL, V}$ & $\hat{G}_{SIL, PGPB}$ \\
		\hline
		$f_{MLA}(x, y) \triangleq
		 \begin{cases}
			-\frac{1}{2}(1 + x)^2 , & \text{if }1 + x + y \le 0 < 1 + x \\
		y \max \left(1 + x + \frac{y}{2}, 0\right) ,              & \text{otherwise}
	\end{cases}$ & $\hat{G}_{MLA}$ & $\hat{G}_{MLA, V}$ & $\hat{G}_{MLA, PGPB}$\\
		\hline
 \end{tabular}
\end{center}
\end{table}

\begin{figure}
	\centering
		\includegraphics[width=.5\linewidth]{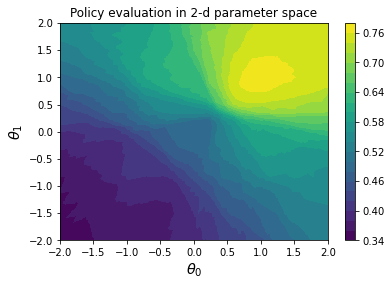}
		\caption{Objective landscape}
		\label{fig:objective}
\end{figure}

\section{Experiment Details}

\subsection{Tabular Experiments}

\begin{figure}
	\centering
	\includegraphics[width=0.4\linewidth]{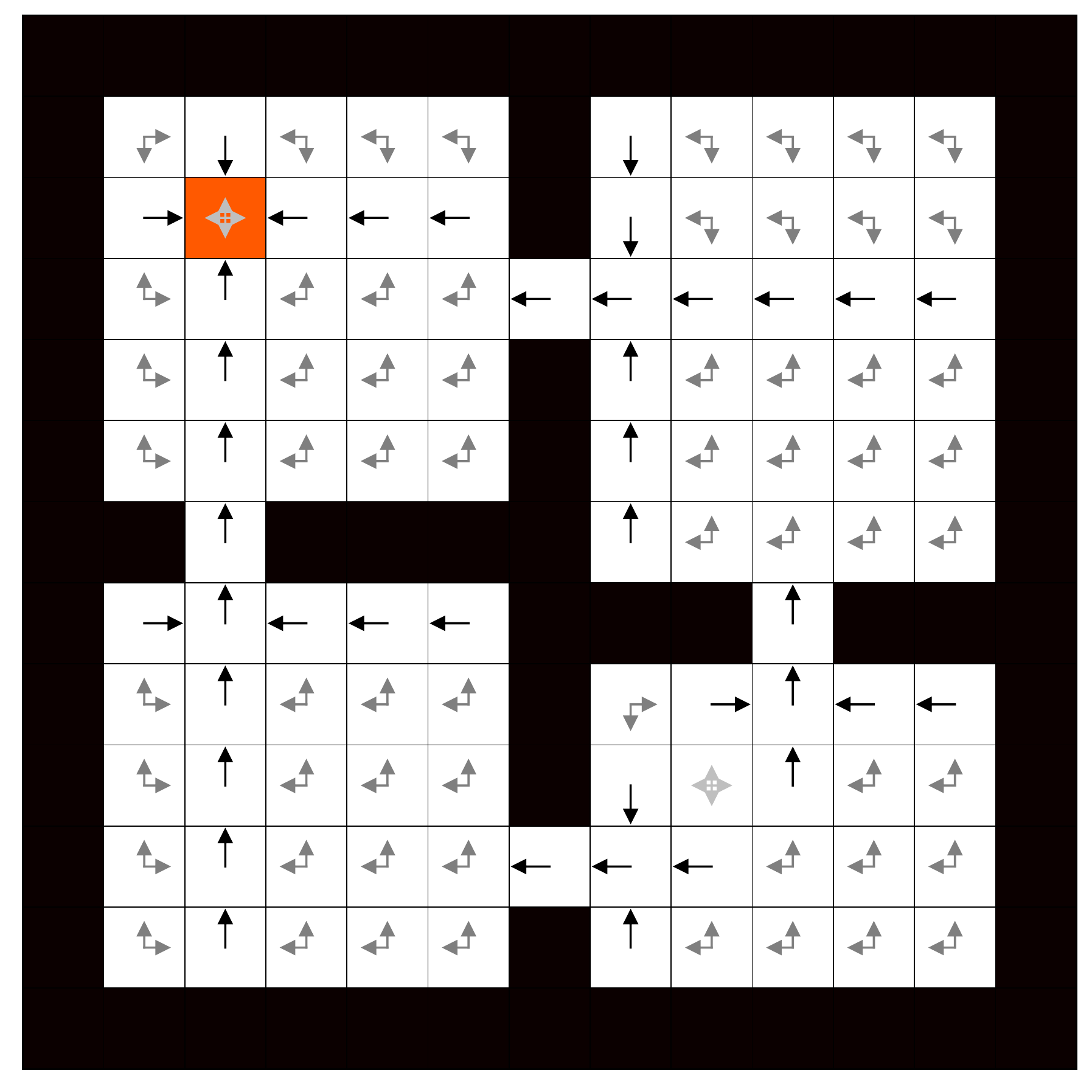}
	\caption{The FourRoom Environment}
	\label{fig:fourroom-env}
\end{figure}

The FourRoom environment is shown in Figure~\ref{fig:fourroom-env}.
Both policy gradient (PG) and $Q$-learning (QL) use SGD as the optimizer. 
The learning rate for PG is 0.1 for both the actor and the critic, while the learning rate for QL is 0.01. 
Figure~\ref{fig:fourroom-results} reports the exponential running average return with exponent discount of 0.01.

\subsection{Continuous Control Experiments}\label{appendix:cce}
The full set of update equations used are summarized below:
\begin{align*}\label{eqn:mla_ppo}
	\hat{G}_{MLA-PPO}(s, a, \theta) &= f_{MLA-PPO}(\Delta_O, \Delta_R) \nabla_\theta \log \pi_\theta(a|s) + \beta \nabla_\theta H(\pi_\theta(.|s)), \text{ where:} \\
	f_{MLA-PPO}(x, y) &= f_{MLA(\alpha_o, \alpha_r)}(x, y) \tau_\epsilon(x, y), \text{ where:}  \\
	f_{MLA(\alpha_o, \alpha_r)}(x, y) &=  y \max \left( 1 + \alpha_o x + \alpha_r y , \frac{(1 + \alpha_o x)_+}{2}\right) \nonumber  \\
	\tau_\epsilon(x, y) &= \mathbbm{1}_{y > 0} \mathbbm{1}_{x < \log(1 + \epsilon)} + \mathbbm{1}_{y <0} \mathbbm{1}_{x > \log(1 - \epsilon)} \\
	\Delta_R &= \hat{A} - \alpha \big(\log \pi(a|s) + H(\pi(.|s))\big)
\end{align*}

\begin{table}[!ht]
\begin{center}
\caption{Optimal hyper-parameter configurations for PPO - MLA on the MuJoCo Tasks. Note that the performance reported for the baseline PPO implementation uses its own independently tuned clipping $\epsilon$.}
	\renewcommand{\arraystretch}{1.5}
\label{tab:table2}
	\begin{tabular}{|c|c|c|c|c|}
		\hline
		Environment & $\alpha_r$ & $\alpha_o$ & $\alpha$ & $\epsilon$\\
		\hline
		Walker2d-v2 & $0.05$ & $0.1$ & $0.0$ & $0.2$ \\
		\hline
		Hopper-v2 & $1.0$ & $0.1$ & $0.01$ & $0.2$ \\
		\hline
		HalfCheetah-v2 & $0.5$ & $0.1$ & $0.1$ & $0.2$ \\
		\hline
		Humanoid-v2 & $1.0$ & $1.0$ & $0.0$ & $0.3$ \\
		\hline
 \end{tabular}
\end{center}
\end{table}

\end{document}